\documentclass{article}
\usepackage{subcaption}
\usepackage{wrapfig}
\usepackage{amsthm}
\usepackage{amsmath}
\usepackage[ruled,vlined]{algorithm2e}
\usepackage{PRIMEarxiv}
\usepackage{placeins}
\usepackage[utf8]{inputenc} 
\usepackage[T1]{fontenc}    
\usepackage{hyperref}       
\usepackage{url}            
\usepackage{booktabs}       
\usepackage{amsfonts}       
\usepackage{nicefrac}       
\usepackage{microtype}      
\usepackage{lipsum}
\usepackage{fancyhdr}       
\usepackage{graphicx}       
\graphicspath{{media/}}     
\newtheorem{theorem}{Theorem}
\newtheorem{proposition}[theorem]{Proposition}
\newtheorem{lemma}[theorem]{Lemma}

\theoremstyle{remark}
\newtheorem{remark}[theorem]{Remark}
\title{Beyond Unimodal Bases: Pullback Geometry for Multimodal Data
}

\author{
  Honglei Brinkmann, Lucas Ng, Georgios Batzolis, Mark Girolami, Carola-Bibiane Schönlieb \\
  University of Cambridge \\
  Cambridge, UK\\
   \And
  Willem Diepeveen \\
  University of California \\
  Los Angeles, USA\\
}

\begin{document}
\maketitle

\begin{abstract}
Data-driven Riemannian geometry provides nonlinear interpolation and
geometric representations of high-dimensional data. For these operations to
be statistically meaningful, paths between observations should
preferentially traverse high-likelihood regions. Existing scalable pullback
constructions typically use a unimodal Gaussian latent distribution, assuming
that the data reside close to a single manifold. For multimodal data, mapping
separated modes or local structures into one Gaussian region can require
substantial transport deformation and compromise the resulting geometry.

We introduce a pullback geometry for data supported on mixtures of
manifolds. Using a latent Gaussian mixture, we define its Riemannian metric
as the matrix square of the responsibility-weighted expected component
precision. The metric is smooth and positive definite and recovers the
existing Gaussian construction in the single-component limit. For structured
overlapping mixtures, we establish conditions under which the log-density is
concave along geodesics, providing a formal connection between the proposed
geometry and paths through high-likelihood regions, and derive the
corresponding local curvature relations.

We instantiate this geometry in a normalizing flow with adaptive mixture
learning, allowing the number of active components to emerge from the data
and supporting component-wise reconstruction and local effective-dimension
estimation. Experiments on synthetic geometric data, a controlled multi-view
image setting with a known reference trajectory, and MNIST show reduced
transport distortion, competitive path support, close reference-trajectory
recovery, and improved interpolation realism. These results extend scalable pullback
geometry beyond datasets that reside close to a single manifold while
retaining tractable and interpretable local structure.
\end{abstract}


\section{Introduction}
\label{sec:introduction}

A data-driven Riemannian geometry equips a dataset with learned notions of
distance, paths, and local structure, enabling geodesic interpolation,
analysis of intrinsic structure, and low-dimensional representation of curved data  \cite{arvanitidis2018latent}
\cite{arvanitidis2018latent,
diepeveen2024pullback}. These operations are meaningful only when the
geometry reflects the data distribution: geodesics between observations in
high-likelihood regions should preferentially traverse other high-likelihood
regions, while remaining computationally tractable and stable.

Recent work has scaled data-driven Riemannian geometry through generative
modelling
\cite{diepeveen2024pullback,diepeveen2024scorepullback}.
Score-based pullback Riemannian geometry (SBPRG) uses an invertible
normalizing flow to map data to a simpler latent distribution, where
geometric operations can be computed efficiently and transferred back by
pullback. Such a flow defines a diffeomorphism, a smooth invertible map with
a smooth inverse, between data and latent space. To retain meaningful
geometry, the learned diffeomorphism must also be sufficiently regular;
existing constructions encourage it to be approximately locally isometric
on the data. Consequently, the flexibility of the flow is restricted, and
the choice of latent statistical model directly affects the resulting
geometry.

Existing scalable constructions typically use a single Gaussian latent
distribution~\cite{diepeveen2024scorepullback}, effectively assuming that
the data reside close to one manifold. For data with multiple modes or local
structures, mapping both the observed structures and the low-density regions
between them into one Gaussian region can require substantial deformation
and compromise the induced geometry.

Figure~\ref{fig:wavyx-main} illustrates this problem on Wavy-X, a synthetic
dataset formed by two intersecting curved structures. A single-Gaussian base
concentrates these structures into one latent region. A fixed two-component
mixture is more flexible but does not alone ensure that the representation
or its geodesics reflect the data structure. In contrast, the adaptive
mixture recovers two distinct latent directions and produces mapped paths
aligned with the observed branches. This motivates jointly learning the
multimodal statistical model and its induced geometry.

\begin{figure*}[h]
    \centering
    \includegraphics[
        width=0.95\textwidth,
        trim=0.3cm 0.3cm 0.21cm 0.2cm,
        clip
    ]{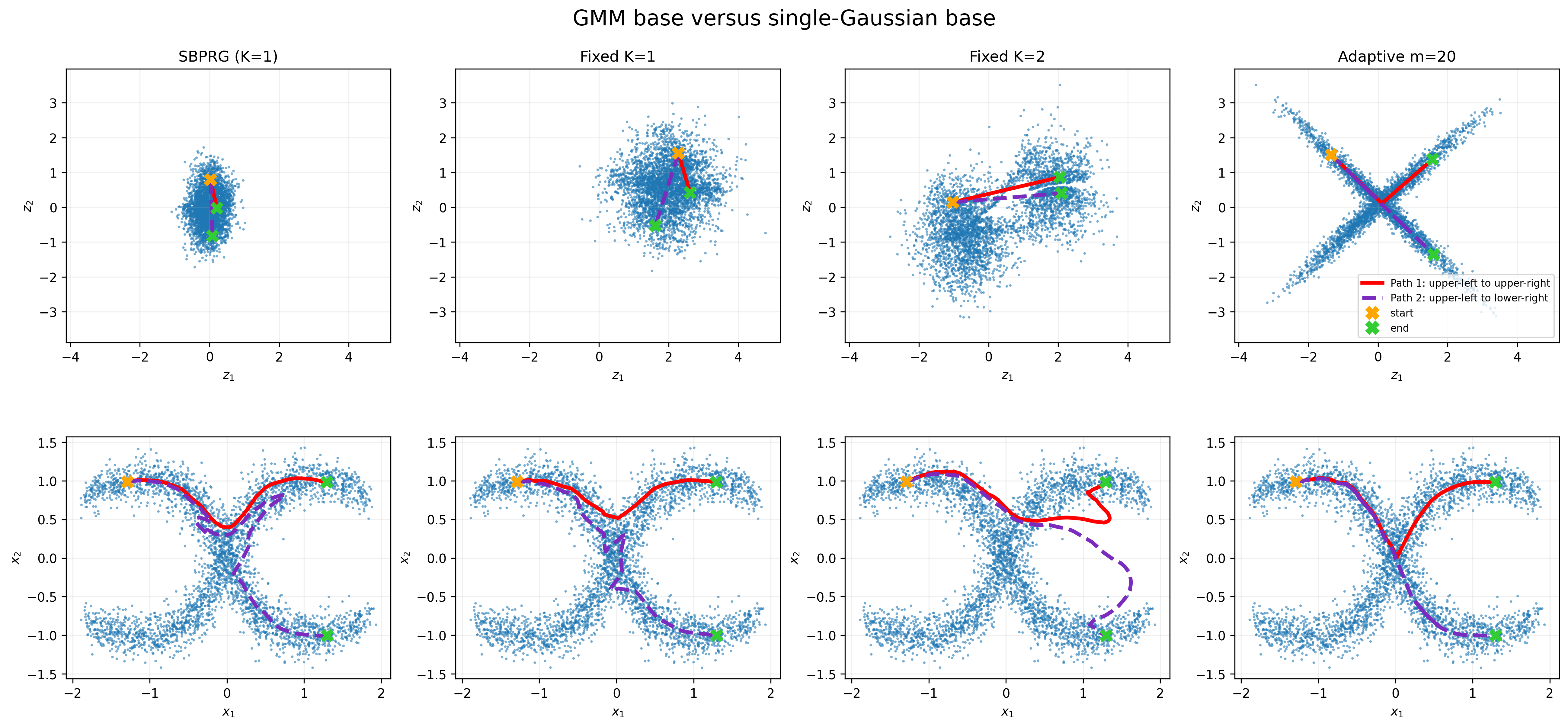}
    \caption{\textbf{Motivating Wavy-X example.}
    Learned latent representations (top) and mapped discrete geodesic
    approximations (bottom) for two shared endpoint pairs. SBPRG and the
    fixed single-Gaussian control concentrate the intersecting structures
    into one latent region. A fixed two-component mixture is more expressive
    but does not consistently recover paths along the observed branches. The
    adaptive mixture retains two active components, organizes the data into
    distinct latent directions, and follows the displayed branches more
    closely.}
    \label{fig:wavyx-main}
\end{figure*}

Extending scalable pullback geometry to mixtures of manifolds therefore
raises three challenges. First, the statistical model must represent
multiple modes without forcing them into a single Gaussian latent region.
Second, the model must induce a computationally tractable Riemannian metric
whose geodesics preferentially pass through high-likelihood regions. Third,
because the number of modes is generally unknown, the learning procedure
should be agnostic to the number of active components.

We address these challenges through the following contributions:
\begin{itemize}
    \item We introduce a normalizing flow with an adaptive Gaussian-mixture
    latent distribution for data supported on mixtures of manifolds.

    \item We define a responsibility-weighted expected-precision metric that
    is smooth, positive definite, computationally tractable, and recovers
    the existing Gaussian construction in the single-component limit. For
    structured shared-mean mixtures, we establish conditions relating its
    geodesics to high-likelihood regions and derive local curvature
    relations.

    \item We jointly learn the flow and mixture while allowing the number of
    active components to emerge during training. The responsibilities also
    support component-wise reconstruction and local effective-dimension
    estimation. Experiments on synthetic geometric data, a controlled
    multi-view image dataset, and MNIST~\cite{lecun1998gradient} evaluate
    transport distortion, path recovery, and interpolation quality.
\end{itemize}

Related work is discussed in Appendix~\ref{app:related_work}.
Section~\ref{sec:background} reviews pullback geometry,
Section~\ref{sec:multimodal_geometry} presents the proposed metric and its
theoretical properties, and Section~\ref{sec:learning} describes adaptive
learning and geometric computation. Implementation details and additional
experiments are provided in the appendix.

\section{Background}
\label{sec:background}

To extend pullback geometry to data supported on mixtures of manifolds, we
first recall how the regularity of a normalizing flow and the choice of its
latent statistical model determine the resulting data geometry. The central
idea of pullback geometry is that, when the learned flow is approximately
locally isometric on the data, geometric computations can be performed in a
tractable latent space and transferred back to data space with controlled
distortion
\cite{diepeveen2024pullback,diepeveen2024scorepullback}. Consequently, the
choice of latent distribution and latent metric directly affects the
geometry induced in data space.
For a Gaussian latent density $p_Z=\mathcal N(\mu,A)$ with covariance
$A\succ0$, the score-based pullback construction uses the constant latent
metric
\begin{equation}
\label{eq:gaussian_score_metric}
g_Z
=
\bigl(D\nabla\log p_Z(z)\bigr)^\top
\bigl(D\nabla\log p_Z(z)\bigr)
=
A^{-2},
\end{equation}
where $D\nabla\log p_Z$ denotes the Hessian of the log-density.

Its constant form makes geometric operations tractable. However, using a
single Gaussian also places the burden of representing multimodal structure
on the flow. This creates a tension: statistical expressiveness favours a
more flexible transport, whereas the geometric guarantees favour a flow
that remains close to a local isometry. Our approach addresses this tension
by increasing the expressiveness of the latent base distribution.

A suitable latent metric should be smooth, positive definite, and
computationally tractable, while producing geodesics that reflect the latent
density. One precise notion of this compatibility is log-concavity of the
density along geodesics
\cite{diepeveen2024scorepullback}. For a geodesic $z(t)$, this requires
\begin{equation}
\label{eq:geodesic_log_concavity}
\frac{\mathrm d^2}{\mathrm dt^2}
\log p_Z\!\left(z(t)\right)
\leq 0.
\end{equation}
This condition prevents the log-density from dipping below the linear
interpolation of its endpoint values along the geodesic and therefore
provides a precise way to study whether paths remain in regions of high
likelihood. It cannot hold globally for every multimodal density,
particularly when modes are strongly separated. We therefore analyze it
for the structured class of overlapping mixtures introduced in
Section~\ref{sec:multimodal_geometry}.

\section{Multimodal Pullback Geometry}
\label{sec:multimodal_geometry}

We seek a latent model for data supported on mixtures of manifolds and a
tractable Riemannian metric whose geodesics preferentially pass through
high-likelihood regions. We therefore use a Gaussian-mixture latent
distribution. However, the score-based construction does not extend straightforwardly from
a single Gaussian to a mixture. Let $z\in\mathbb R^d$, let $K$ be the number
of components, and let $w_k>0$, $\sum_k w_k=1$, with means $\mu_k$ and
covariances $A_k\succ0$. For
$
p_Z(z)=\sum_{k=1}^K w_k\mathcal N(z;\mu_k,A_k),
$
define the responsibilities and expected precision by
\begin{equation}
\label{eq:gmm-density}
\gamma_k(z)
=
\frac{w_k\mathcal N(z;\mu_k,A_k)}{p_Z(z)},
\qquad
P(z)
=
\sum_{k=1}^K\gamma_k(z)A_k^{-1}.
\end{equation}
The corresponding score-based metric is
\begin{equation}
g_{\mathrm{score}}(z)
=
\bigl(D\nabla\log p_Z(z)\bigr)^\top
\bigl(D\nabla\log p_Z(z)\bigr)
=
\left(
-P(z)
+
\operatorname{Cov}_{\gamma(z)}
\left[
\nabla\log\mathcal N(z;\mu_k,A_k)
\right]
\right)^2.
\end{equation}
Here, $\operatorname{Cov}_{\gamma(z)}$ denotes covariance over the component index under the responsibilities $\gamma(z)$. Unlike in the single-Gaussian case, this matrix can be singular, producing a degenerate metric. Where responsibilities are shared across components, the covariance term can partially cancel the expected precision. We therefore omit this term and define
\begin{equation}
\label{eq:expected_precision_metric}
g_Z(z)=P(z)^2,
\end{equation}
which is positive-definite.
When $K=1$, or when components are separated far apart but have identical covariance, it reduces
to $g_Z=A_1^{-2}$, recovering the metric of
\cite{diepeveen2024scorepullback}.
The latter case is important because, for well-separated modes, a path connecting high-density regions must cross a low-density region, so geodesic log-concavity cannot hold everywhere.%
\footnote{Viewed via the desiderata, there is no compelling reason a metric should,
for example, measure distances between isolated modes to route geodesics between them as a shortest-path algorithm would.}
Among the possible choices, our proposal recovers prior work and satisfies the desiderata locally within each component.

\subsection{Star-shaped densities}

The metric in Equation~(\ref{eq:expected_precision_metric}) is defined for
general Gaussian mixtures. To analyze its relationship with the latent
density explicitly, we consider a structured class in which all component
means coincide at the origin and the component precisions share an
eigenbasis. This restriction is used only for the following theoretical
analysis; the learned experimental mixtures need not satisfy it. The
resulting distributions have star-convex superlevel sets, related to the
archetypal setting of \cite{diepeveen2026stararchetype}.

\begin{theorem}[Geodesic log-concavity for Gaussian stars]
\label{thm:star-glc}
For a star with diagonalized covariances, the pointwise log-concavity condition on affine-parametrised geodesic \(z(t)\) under $g_Z=P^2$ is
\begin{equation}
\label{eq:star_glc_condition}
\underbrace{-\dot{z}^{\top}P\dot{z}}_{\leq0}
+
\sum_{1\leq i<j\leq d}
\underbrace{
\operatorname{Cov}_{\gamma(z)}
[q_{k,i},q_{k,j}]
}_{\text{precision covariance}}
\underbrace{
\left(
\sqrt{\frac{\rho_i}{\rho_j}}\dot{z}_i z_j
-
\sqrt{\frac{\rho_j}{\rho_i}}z_i\dot{z}_j
\right)^2
}_{\geq0}
\leq0,
\end{equation}
where $q_{k,i}$ is the $i$-th diagonal element of $A_k^{-1}$ and
$\rho_i=\mathbb E_{\gamma(z)}[q_{k,i}]
=\sum_k\gamma_k(z)q_{k,i}$.
\end{theorem}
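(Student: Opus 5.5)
The plan is to compute $\tfrac{\mathrm d^2}{\mathrm dt^2}\log p_Z(z(t))$ directly. Because the star is diagonalized, the gradient, the metric and its Christoffel symbols are all explicit, so the geodesic equation can be substituted in closed form.

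\textbf{Step 1: structure of the star.} Write $Q_k=A_k^{-1}=\operatorname{diag}(q_{k,1},\dots,q_{k,d})$. All means are at the origin, so $\nabla\log\mathcal N(z;0,A_k)=-Q_kz$. Hence
\[
\nabla\log p_Z=\sum_k\gamma_k\nabla\log\mathcal N_k=-P(z)z,
\qquad
P(z)=\operatorname{diag}(\rho_1,\dots,\rho_d),
\]
and the metric $g_Z=P^2$ is diagonal with entries $h_i=\rho_i^2$. Next I need the derivatives of the responsibilities. From $\partial_j\gamma_k=\gamma_k(\partial_j\log\mathcal N_k-\partial_j\log p_Z)$ we get $\partial_j\gamma_k=-\gamma_k(q_{k,j}-\rho_j)z_j$. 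Summing against $q_{k,i}$ gives the key identity
\[
\partial_j\rho_i=-z_j\,C_{ij},
\qquad
C_{ij}=\operatorname{Cov}_{\gamma(z)}[q_{k,i},q_{k,j}].
\]

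\textbf{Step 2: differentiate along the curve.} We have $\tfrac{\mathrm d}{\mathrm dt}\log p_Z=-\dot z^\top Pz$, so
\[
\frac{\mathrm d^2}{\mathrm dt^2}\log p_Z
=-\dot z^\top P\dot z-\ddot z^\top Pz-\dot z^\top\dot P z .
\]
By Step 1, $\dot\rho_i=-\sum_j C_{ij}z_j\dot z_j$. Therefore the last term equals $\sum_{i,j}C_{ij}z_iz_j\dot z_i\dot z_j$.

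\textbf{Step 3: insert the geodesic equation.} I use the standard Christoffel symbols of a diagonal metric:
\[
\Gamma^i_{ij}=\frac{\partial_jh_i}{2h_i},
\qquad
\Gamma^i_{jj}=-\frac{\partial_ih_j}{2h_i}\quad(j\neq i).
\]
With $h=\rho^2$ and Step 1, the geodesic equation becomes
\[
\ddot z_i=\sum_j\frac{2C_{ij}z_j\dot z_i\dot z_j}{\rho_i}-\sum_j\frac{\rho_jC_{ij}z_i\dot z_j^2}{\rho_i^2}.
\]
The $j=i$ term should be checked separately; it is consistent with $\Gamma^i_{ii}$. Substituting,
\[
-\ddot z^\top Pz=-2\sum_{i,j}C_{ij}z_iz_j\dot z_i\dot z_j+\sum_{i,j}C_{ij}\frac{\rho_j}{\rho_i}z_i^2\dot z_j^2 .
\]

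\textbf{Step 4: symmetrize.} Adding Steps 2 and 3, the non-precision part is
\[
\sum_{i,j}C_{ij}\Bigl[\tfrac{\rho_j}{\rho_i}z_i^2\dot z_j^2-z_iz_j\dot z_i\dot z_j\Bigr].
\]
The diagonal terms $i=j$ cancel. I then pair $(i,j)$ with $(j,i)$ using $C_{ij}=C_{ji}$, so each pair contributes
\[
C_{ij}\Bigl[\tfrac{\rho_j}{\rho_i}z_i^2\dot z_j^2+\tfrac{\rho_i}{\rho_j}z_j^2\dot z_i^2-2z_iz_j\dot z_i\dot z_j\Bigr].
\]
This is exactly $C_{ij}\bigl(\sqrt{\rho_i/\rho_j}\,\dot z_iz_j-\sqrt{\rho_j/\rho_i}\,z_i\dot z_j\bigr)^2$. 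Together with $-\dot z^\top P\dot z\le 0$ (since $P\succ0$), this yields \eqref{eq:star_glc_condition}.

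The main obstacle is the bookkeeping in Step 3. The diagonal Christoffel terms must be handled correctly, and the signs must combine so that the cross terms form a perfect square. I would verify this first in $d=2$ before writing the general sum.
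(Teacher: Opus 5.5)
Your proposal is correct and follows essentially the same route as the paper. Both arguments rest on $\partial_j\rho_i=-z_jC_{ij}$ and on the Christoffel symbols of the diagonal metric $\rho_i^2$. Your split into $-\dot z^\top P\dot z-\dot z^\top\dot P z$ and $-\ddot z^\top Pz$ (with $\ddot z$ taken from the geodesic equation) is exactly the paper's split into $u^\top\nabla^2 f\,u$ and the connection term $-\partial_\ell f\,\Gamma^\ell_{ab}u_au_b$. This gives the same two sums before symmetrizing, and your handling of the $j=i$ terms and the pairing into perfect squares checks out.
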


One immediate consequence is that geodesic log-concavity holds globally if
$\operatorname{Cov}_{\gamma(z)}[q_{k,i},q_{k,j}]\leq0$ for
$i\neq j$. Geodesics along the coordinate axes are also log-concave with
respect to the density. Restricting the analysis to a coordinate plane
through the origin gives
\begin{equation}
\label{eq:star_glc_2d}
\left(
\frac{z_1^2}{\rho_1(z)}
+
\frac{z_2^2}{\rho_2(z)}
\right)
\operatorname{Cov}_{\gamma(z)}
[q_{k,1},q_{k,2}]
\leq1.
\end{equation}

\begin{wrapfigure}[16]{r}{0.5\textwidth}
    \vspace{-2.5\baselineskip}
    \centering
    \includegraphics[
        width=\linewidth
    ]{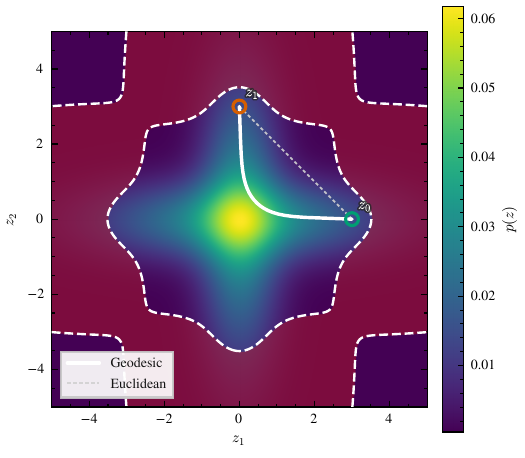}
        \caption{For this $d=2,K=3$ mixture of
    low central covariance 
    \((-0.823)\),
     a large (non-red) region satisfies geodesic log-concavity;
    a geodesic
    follows the arms.}
    \label{fig:safe-zone}
    \vspace{-0.5\baselineskip}
\end{wrapfigure}

For positive precision covariance,
Equation~(\ref{eq:star_glc_2d}) has the form of a generalized ellipse illustrated in Figure~\ref{fig:safe-zone} with
effective radii
\begin{equation}
\begin{aligned}
R_1
&=
\sqrt{
\frac{\rho_1}
{\operatorname{Cov}_{\gamma}[q_{k,1},q_{k,2}]}
},
\\
R_2
&=
\sqrt{
\frac{\rho_2}
{\operatorname{Cov}_{\gamma}[q_{k,1},q_{k,2}]}
}.
\end{aligned}
\end{equation}

\paragraph{Curvature.}
At the origin, the sectional curvature of the coordinate $(i,j)$-plane is
\begin{equation}
\label{eq:sectional_curvature_origin}
K_{ij}(0)
=
\frac{\rho_i+\rho_j}
{\rho_i^2\rho_j^2}
\operatorname{Cov}_{\gamma(0)}
[q_{k,i},q_{k,j}],
\end{equation}
where the $\rho_i$ are evaluated at the origin. Thus, the same precision
covariance controls geodesic log-concavity and sectional curvature. At the
centre of a Gaussian star, all component scores vanish, so
$\nabla P(0)=\nabla g_Z(0)=0$. 

In coordinates normalized such that
$P(0)=I$, the local expansion is
\begin{equation}
\label{eq:local_glc_curvature}
\frac{\mathrm d^2}{\mathrm dt^2}
\log p_Z(z(t))
=
-\|u\|_2^2
+
\frac12
\sum_{i,j}
C_{ij}^0u_i^2z_j^2
+
\frac12R_0(z,u,u,z)
+
\mathcal O(\|z\|^4\|u\|^2)
\end{equation}
where $u$ is the geodesic velocity, \(C_{ij}(z)=\operatorname{Cov}_{\gamma(z)}[q_{k,i},q_{k,j}], C_{ij}^0=C_{ij}(0)\) and \(R_0\) is the Riemann curvature at the origin.
The leading quadratic correction splits into spatial variation of the expected precision and a curvature contribution. The curvature term is generated by cross-coordinate precision covariance.

\paragraph{Informing numerical evaluation.}
We aggregate the sectional curvatures through the Ricci tensor, their
standard contraction, to obtain a compact local summary. Its full expression
and derivation are provided in Appendix~\ref{app:ricci_derivation}. In the
diagonal case, it can be computed in $\mathcal O(Kd)$ time. Writing
$\operatorname{Ric}_\star(0):=\operatorname{Ric}(0)$ for this local structured diagnostic and
$r_i=(\operatorname{Ric}_\star(0))_{ii}$, we report the star--Ricci diagnostic
\begin{equation}
\label{eq:lstar_softplus}
\mathcal L_\star
=
\sum_{i=1}^{d}
\operatorname{softplus}(r_i+\delta_\star),
\end{equation}
where $\delta_\star\geq0$ is a curvature margin; we use $\delta_\star=0$ in the reported
experiments. The quantity is evaluated after training as a local diagnostic.
For learned mixtures with untied means, we report it as a structured
curvature summary rather than as the exact Ricci curvature of the general
learned metric.

Proofs and additional derivations are provided in
Appendix~\ref{app:geometry_proofs}. An additional structured case with
separated component means is discussed in
Appendix~\ref{app:chain_mixtures}.


\section{Stabilized Learning of Multimodal Pullback Geometry}
\label{sec:learning}

We now address three practical requirements: learning without knowing the
mixture complexity, limiting unnecessary transport distortion, and retaining
tractable geodesic computation and local dimension reduction. We alternate
between updating a normalizing flow and an initially overcomplete Gaussian
mixture. Slower mixture updates, damping, and conservative merge--prune
operations stabilize training while allowing the number of active components
to emerge from the data. Full algorithms and regularizer definitions are
provided in Appendix~\ref{app:algorithms}. We write the learned flow as
$\phi_\theta$ and its inverse as $\phi_\theta^{-1}$.

\subsection{Adaptive flow--mixture learning}
\label{sec:adaptive_learning}

The flow is updated frequently on minibatches, whereas mixture parameters and
structural operations are updated less frequently using aggregated latent
statistics. At mixture-update step $e$, we use temperature-smoothed
responsibilities
\begin{equation}
\label{eq:temperature_responsibilities}
\gamma_k^{(T_e)}(z)
=
\operatorname{softmax}_k
\left(
\frac{\log w_k+\log\mathcal N(z;\mu_k,A_k)}{T_e}
\right),
\end{equation}
where $T_e>0$ is annealed to one. Let
$\eta_e=\{w_k,\mu_k,A_k\}_{k=1}^{K_e}$ denote the active mixture parameters
and $\eta_e^*$ their responsibility-weighted moment targets. With damping
coefficient $\zeta\in(0,1]$, we update
\begin{equation}
\label{eq:damped_updates}
\eta_{e+1}
=
(1-\zeta)\eta_e+\zeta\eta_e^*.
\end{equation}
Here, diagonal variances are blended in log space; full covariances are
blended linearly and regularized by a positive diagonal floor.

A symmetric Dirichlet prior regularizes the weights. Its concentration
parameter $\alpha$ favours sparse weights when $\alpha<1$ and balanced
weights when $\alpha>1$. Low-weight components are pruned, and a candidate
merge is accepted only when it improves the penalized mixture objective.
The MAP-style weight update and complete merge--prune procedure are given in
Algorithm~\ref{alg:stable_training}.

\subsection{Training and geometric computation}
\label{sec:training_and_computation}

\paragraph{Training objective.}
The flow is trained using
\begin{equation}
\label{eq:ltotal}
\mathcal L_{\mathrm{train}}
=
\mathcal L_{\mathrm{NLL}}
+
\lambda_{\mathrm{iso}}\mathcal L_{\mathrm{iso}}
+
\lambda_{\mathrm{metric}}\mathcal L_{\mathrm{metric}},
\end{equation}
where
$\mathcal L_{\mathrm{NLL}}
=
-\mathbb E_x
\left[
\log p_Z(\phi_\theta(x))
+
\log|\det D\phi_\theta(x)|
\right]$
and $\lambda_{\mathrm{iso}},\lambda_{\mathrm{metric}}\geq0$. The isometry
term discourages local directional distortion. For MNIST and the BlueWhale rendered-view dataset described in Section~\ref{sec:exp_bluewhale}, the
metric-smoothness term additionally discourages abrupt changes in $P(z)$
between nearby latent points; otherwise $\lambda_{\mathrm{metric}}=0$. Both regularizers are defined in Appendix~\ref{app:regularizers}.

For the three-dimensional synthetic experiment (Appendix~\ref{app:tetrahedral}), MNIST, and BlueWhale, a
small mean-shrinkage update moves selected component means toward their
weight-averaged centre. This update encourages component overlap and is not
an additional term in Equation~(\ref{eq:ltotal}); its definition appears in Appendix~\ref{app:mixture_updates}.

\paragraph{Metric-based geodesics.}
Given endpoints $z_0,z_T\in\mathbb R^d$, let
$z_{0:T}=(z_0,\ldots,z_T)$ be a path with $T$ equal-duration segments. We
optimize its interior vertices using
\begin{equation}
\label{eq:metric_path_energy}
\mathcal E_{g_Z}(z_{0:T})
=
T\sum_{i=0}^{T-1}
\left\|
P\left(\frac{z_i+z_{i+1}}{2}\right)
(z_{i+1}-z_i)
\right\|_2^2.
\end{equation}
Because $g_Z=P^2$, this is the midpoint approximation of the metric energy.
The endpoints are fixed, the interior vertices are initialized linearly, and
the optimized path is mapped to data space through $\phi_\theta^{-1}$.
Wavy-X uses $T=80$, whereas MNIST uses $T=16$. These paths use no density,
smoothness, spacing, arclength, or log-concavity penalty. BlueWhale instead
uses the support-graph path described in
Section~\ref{sec:exp_bluewhale}. Algorithm~\ref{alg:stable_geodesic} gives
the complete continuous solver.

\subsection{Component-wise local reconstruction}
\label{sec:local_reconstruction}

Let $x_n\in\mathbb R^d$ and
$r_{nk}=\gamma_k(\phi_\theta(x_n))$. These responsibilities define weighted
data-space means $\bar x_k$ and covariances $C_k^x$. If
$\nu_{k,1}\geq\cdots\geq\nu_{k,d}$ are the eigenvalues of $C_k^x$, define
\begin{equation}
\label{eq:local_dimension}
d_k
=
\min
\left\{
r:
\frac{\sum_{i=1}^{r}\nu_{k,i}}
{\sum_{i=1}^{d}\nu_{k,i}}
\geq\tau_{\mathrm{var}}
\right\},
\end{equation}
where $\tau_{\mathrm{var}}\in(0,1)$ is the variance-retention threshold.
The leading $d_k$ eigenvectors define a component-wise linear reconstruction
and local effective-dimension estimate. These subspaces are fitted in data
space and are not taken from the latent GMM covariances.


\section{Experiments}
\label{sec:experiments}

We evaluate whether an adaptive Gaussian-mixture latent base reduces flow
distortion and supports data-faithful path computation. The three datasets
evaluate complementary aspects of the method: Wavy-X tests transport
distortion and path support in two dimensions, BlueWhale provides a
metadata-defined reference trajectory, and MNIST evaluates interpolation
realism without assuming a ground-truth path. Additional point-cloud and
component-wise analyses are reported in
Appendices~\ref{app:synthetic} and~\ref{app:image-analysis}.

\subsection{Experimental Setup}
\label{sec:exp_setup}

\paragraph{Datasets.}
\textbf{Wavy-X} is a two-dimensional synthetic dataset formed by two noisy,
sinusoidally perturbed strokes that intersect at the origin and produce four
curved half-arms; see Figure~\ref{fig:wavyx-main}. Generation details are
provided in Appendix~\ref{app:wavyx-generation}.
\textbf{BlueWhale} is a controlled rendered-image dataset whose viewpoint
metadata define a reference trajectory through a star-shaped viewpoint
family; generation details are given in
Appendix~\ref{app:bluewhale-generation}.
\textbf{MNIST} is a high-dimensional multimodal benchmark for which we
assume no ground-truth interpolation path.

\paragraph{Models and evaluation scope.}
We compare the adaptive GMM flow with the single-Gaussian score-based
pullback Riemannian geometry (SBPRG)
baseline~\cite{diepeveen2024scorepullback}. Here, $m$ is the initial number of components; the fixed-$m$ controls
disable adaptation, so $K=m$. The $m=1$ and $m=2$ variants separate the effect of using a mixture base from that of
adaptive component selection. Straight latent interpolation is included as
a path reference where applicable. All models use the same data splits,
RealNVP architecture~\cite{dinh2017density}, and training budget.

The path procedure depends on the question addressed by each dataset. For
Wavy-X, we minimize the expected-precision metric energy in
Equation~(\ref{eq:metric_path_energy}) with $T=80$ segments; MNIST uses the
same objective with $T=16$. BlueWhale instead uses a density-aware,
metric-weighted support graph over encoded training samples, allowing direct
comparison with its metadata-defined reference trajectory. Construction
details are provided in Appendix~\ref{app:bluewhale_path}.

Unless otherwise stated, results are reported for one fixed random seed and
should be interpreted descriptively. Star--Ricci quantities are computed
only as post-training diagnostics. Although the proposed metric reduces to
SBPRG when $K=1$, our fixed-$m=1$ control uses the proposed training
procedure with component adaptation disabled, whereas SBPRG uses the
baseline training procedure. The two models can therefore learn different
transport maps.

\subsection{Wavy-X: Transport Distortion and Path Support}
\label{sec:exp_wavyx}
\FloatBarrier

Wavy-X permits direct inspection of the learned transport and decoded paths.
We compare SBPRG, fixed $m=1$, fixed $m=2$, and the adaptive GMM initialized
with $m=20$ candidate components. Path~1 joins the upper-left and
upper-right half-arms, whereas Path~2 joins the upper-left and lower-right
half-arms.

We report test negative log-likelihood (NLL), mean isometry deviation
$D_{\mathrm{iso}}$, the 95th percentile (Q95) of the flow-Jacobian condition
number $\kappa_J$, and the final active-component count $K_{\mathrm{active}}$.
We also report the star--Ricci diagnostics $L_\star$ and
$\lambda_{\max}(\operatorname{Ric}_\star(0))$, together with path-support
deviation. Definitions are provided in
Appendix~\ref{app:evaluation-metrics}.

\begin{table}[h]
\centering
\caption{\textbf{Wavy-X quantitative comparison.}
Lower is better for NLL, distortion, conditioning, $L_\star$, and support
deviation. Paths are densely sampled discrete geodesic approximations under
$g_Z=P^2$.}
\label{tab:wavyx-main}
\scriptsize
\setlength{\tabcolsep}{3.2pt}
\begin{tabular}{lrrrrccrr}
\toprule
Model
& NLL $\downarrow$
& $D_{\mathrm{iso}}\downarrow$
& Q95 $\kappa_J\downarrow$
& $K_{\mathrm{active}}$
& $L_\star\downarrow$
& $\lambda_{\max}(\mathrm{Ric}_\star(0))$
& Path 1 support $\downarrow$
& Path 2 support $\downarrow$ \\
\midrule
SBPRG ($K=1$)
& 1.5777 & 2.9788 & 40.9951 & 1
& 1.3863 & 0.0000
& 0.3262 & 0.3320 \\
Fixed $m=1$
& 1.6078 & 9.1065 & 28.2596 & 1
& 1.3863 & $\approx 0$
& 0.3396 & \textbf{0.2905} \\
Fixed $m=2$
& 1.6893 & 10.2015 & 28.0523 & 2
& 1.3863 & $\approx 0$
& 1.1756 & 2.5638 \\
Adaptive $m=20$
& \textbf{1.5590} & \textbf{1.8943} & \textbf{4.1490} & 2
& $\mathbf{6.47\!\times\!10^{-17}}$ & \textbf{-37.2766}
& \textbf{0.2794} & 0.3158 \\
\bottomrule
\end{tabular}
\end{table}

The adaptive model obtains the lowest NLL, isometry deviation, and upper-tail
Jacobian conditioning. It also gives the lowest support deviation for
Path~1, while fixed $m=1$ is best for Path~2. Because support deviation
measures proximity to the data cloud, it does not penalize path folding or
distinguish between equally supported routes. The fixed-$m=2$ results show
that introducing a mixture base alone does not guarantee better paths.

The adaptive model reduces 20 candidate components to two. Its largest
star--Ricci eigenvalue is $-37.28$, yielding $L_\star\approx0$. These
quantities are structured post-training diagnostics rather than the exact
Ricci curvature of the learned untied-mean mixture.

Appendix~\ref{app:wavyx-layerwise} shows how the same samples are
progressively transformed from data space to latent space. Compared with the
baselines, the adaptive model better preserves the two intersecting
structures while requiring less transport deformation, consistent with the
distortion measurements. Component-wise analysis recovers the two crossing
arm pairs and assigns each effective dimension one, with $99.41\%$ and
$99.29\%$ explained variance; see Appendix~\ref{app:rae-wavyx}.

\subsection{BlueWhale: Recovery of a Data-Supported Reference Trajectory}
\label{sec:exp_bluewhale}

BlueWhale consists of rendered views with viewpoint metadata defining a
reference trajectory through the common junction of two viewpoint arms.
Unlike Wavy-X, the encoded observations form three thin branches meeting at
a junction that is not locally a smooth manifold. Continuous optimization
under a smooth Riemannian metric can therefore cut across the junction or
between nearby branches instead of following the observed structure.

We instead compute a shortest path on a mutual $k$-nearest-neighbour graph
whose vertices are encoded training observations. Edge costs combine the
proposed metric with a latent-density penalty, producing a density-aware,
metric-weighted support-graph path. Construction details are given in
Appendix~\ref{app:bluewhale_path}.

We compare the adaptive support-graph path with the decoded SBPRG path and
straight latent interpolation. Because the adaptive path consists of
observed samples, this experiment evaluates recovery of a data-supported
reference trajectory rather than providing a controlled comparison of
continuous geodesic solvers.

\begin{figure}[h]
    \centering

    \begin{minipage}[c]{0.06\linewidth}
        \centering\textbf{(a)}
    \end{minipage}%
    \begin{minipage}[c]{0.9\linewidth}
        \centering
        \includegraphics[
            width=\linewidth,
            trim=0cm 0.1cm 0cm 1.15cm,
            clip
        ]{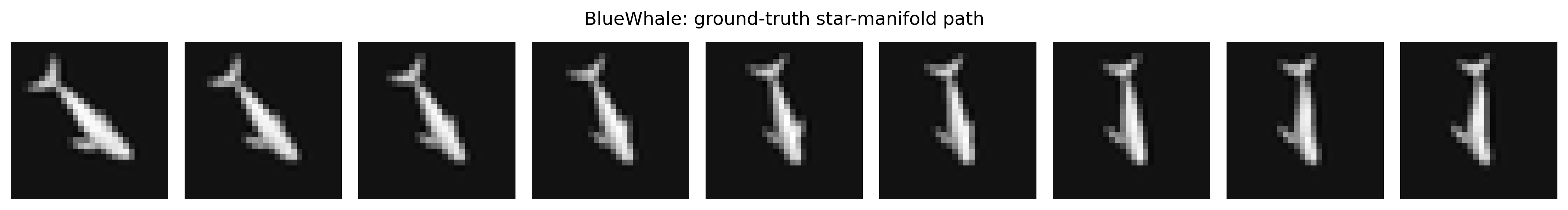}
    \end{minipage}

    \vspace{0.2em}

    \begin{minipage}[c]{0.06\linewidth}
        \centering\textbf{(b)}
    \end{minipage}%
    \begin{minipage}[c]{0.9\linewidth}
        \centering
        \includegraphics[
            width=\linewidth,
            trim=0cm 0.1cm 0cm 1.15cm,
            clip
        ]{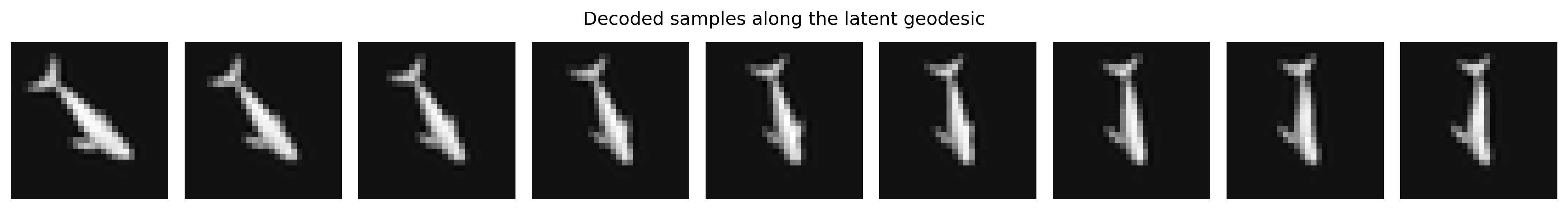}
    \end{minipage}

    \vspace{0.2em}

    \begin{minipage}[c]{0.06\linewidth}
        \centering\textbf{(c)}
    \end{minipage}%
    \begin{minipage}[c]{0.9\linewidth}
        \centering
        \includegraphics[
            width=\linewidth,
            trim=0cm 0.1cm 0cm 1.15cm,
            clip
        ]{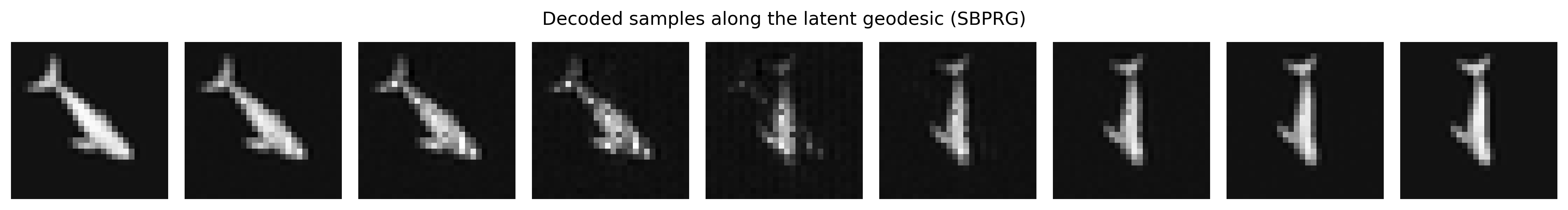}
    \end{minipage}

    \vspace{0.2em}

    \begin{minipage}[c]{0.06\linewidth}
        \centering\textbf{(d)}
    \end{minipage}%
    \begin{minipage}[c]{0.9\linewidth}
        \centering
        \includegraphics[
            width=\linewidth,
            trim=0cm 0.1cm 0cm 1.15cm,
            clip
        ]{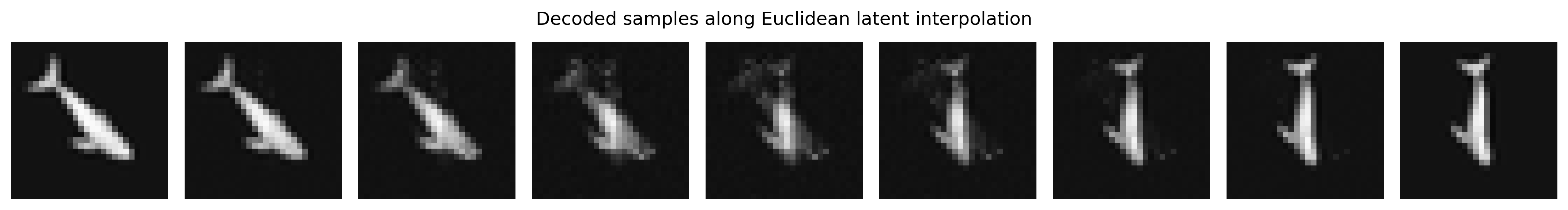}
    \end{minipage}

    \caption{\textbf{BlueWhale viewpoint paths.}
    Image sequences between the same endpoints:
    \textbf{(a)} the metadata-defined reference trajectory;
    \textbf{(b)} the adaptive-GMM support-graph path;
    \textbf{(c)} the decoded SBPRG path; and
    \textbf{(d)} straight latent interpolation.
    Panel~\textbf{(b)} consists of observed samples, whereas
    panels~\textbf{(c)} and~\textbf{(d)} are continuously decoded.}
    \label{fig:bluewhale-paths}
\end{figure}

Figure~\ref{fig:bluewhale-paths} shows that the adaptive representation
admits a support-constrained route aligned with the progressive viewpoint
change in the reference sequence. The continuously decoded SBPRG and
straight-latent paths contain visibly blurred intermediate frames.

At matched sequence positions, we report mean squared error (MSE), mean
absolute error (MAE), mean structural similarity index (SSIM), and
worst-step MSE, defined as the largest frame-wise MSE. Metric and sequence
alignment details are provided in
Appendix~\ref{app:evaluation-metrics}.

\begin{table}[h]
\centering
\caption{\textbf{BlueWhale path comparison.}
Image-space agreement with the metadata-defined reference sequence. The
adaptive path is constrained to observed samples and is therefore not
directly comparable to the continuously decoded alternatives.}
\label{tab:bluewhale-paths}
\small
\setlength{\tabcolsep}{4pt}
\begin{tabular}{lcccc}
\toprule
Path
& MSE $\downarrow$
& MAE $\downarrow$
& Mean SSIM $\uparrow$
& Worst-step MSE $\downarrow$ \\
\midrule
Adaptive-GMM support graph
& 0.000306
& 0.003897
& 0.993647
& 0.001257 \\
SBPRG decoded path
& 0.010547
& 0.032986
& 0.835206
& 0.022814 \\
Straight latent path
& 0.007188
& 0.026982
& 0.859611
& 0.013496 \\
\bottomrule
\end{tabular}
\end{table}

The support-graph path obtains the lowest MSE and MAE, the highest mean SSIM,
and the lowest worst-step MSE relative to the reference sequence. This
demonstrates reference-trajectory recovery on the encoded sample support but
does not isolate the contribution of the continuous expected-precision path
objective. Component-wise reconstruction and effective-dimension results are
reported in Appendix~\ref{app:rae-bluewhale}.

\subsection{MNIST: Interpolation Realism and Perceptual Variation}
\label{sec:exp_mnist}

MNIST provides a multimodal image benchmark without a prescribed
ground-truth path. We compare the adaptive GMM, initialized with $m=30$
candidate components, against SBPRG and straight latent interpolation.
Digit labels are not used to train the generative models; they are used only
to select held-out different-digit endpoint pairs and to train the
independent evaluation classifier. Paths use $T=16$ segments.

We measure interior-sample realism using the Fr\'echet distance (Feature FD)
between all interior path samples and held-out MNIST images in the feature
space of the evaluation classifier. We also report perceptual path length
(PPL), the accumulated feature-space distance between consecutive samples,
and relative perceptual-distance variance
(PDV\textsubscript{rel}), which measures variation in perceptual step size.
Definitions are given in Appendix~\ref{app:evaluation-metrics}.

\begin{figure}[h]
    \centering
    \begin{subfigure}[t]{\linewidth}
        \centering
        \includegraphics[
            width=\linewidth,
            trim=0cm 0.1cm 0cm 0.7cm,
            clip
        ]{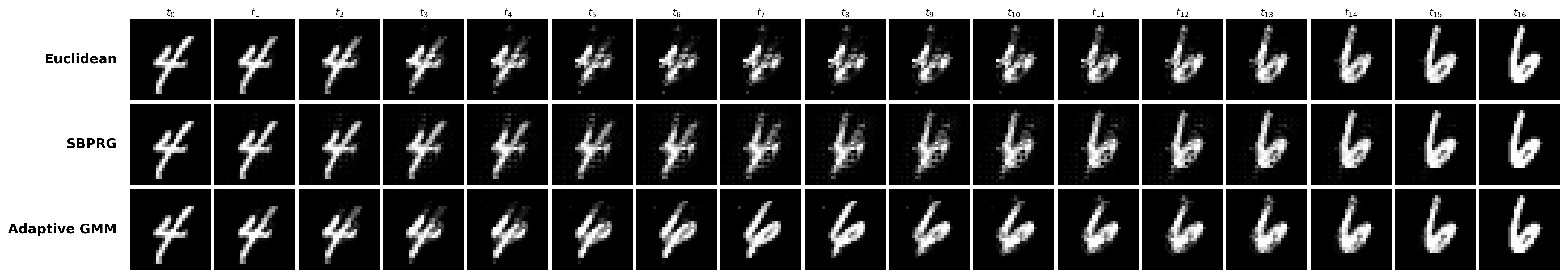}
    \end{subfigure}

    \vspace{0.1em}

    \begin{subfigure}[t]{\linewidth}
        \centering
        \includegraphics[
            width=\linewidth,
            trim=0cm 0.1cm 0cm 0.7cm,
            clip
        ]{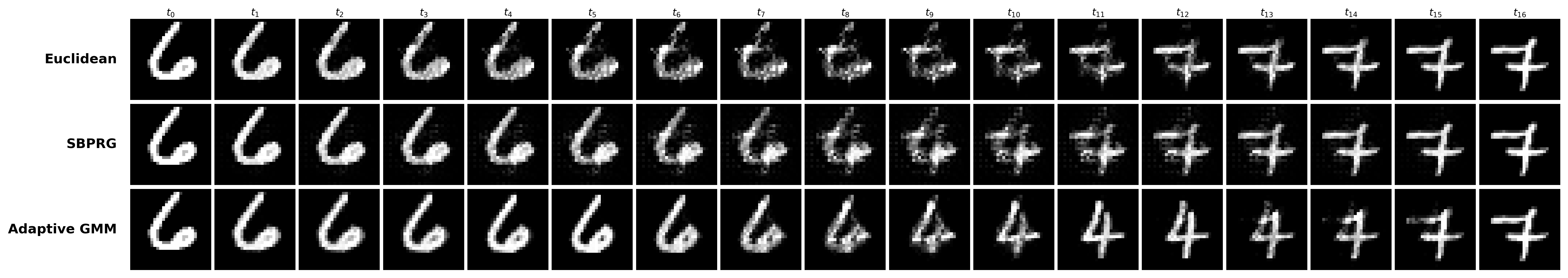}
    \end{subfigure}

    \caption{\textbf{MNIST path interpolation} ($T=16$;
    rows: straight latent / SBPRG / adaptive GMM).
    \textbf{Top:} $4\to6$. The adaptive sequence remains recognizable,
    whereas the alternatives contain visibly noisy intermediate frames.
    \textbf{Bottom:} $6\to7$. The adaptive path passes through a
    digit-4-like region of the learned representation.}
    \label{fig:mnist-geodesics}
\end{figure}

\begin{table}[h]
\centering
\caption{\textbf{MNIST path interpolation.}
Classifier-feature Fr\'echet distance is the primary measure of
interior-sample realism. The adaptive GMM improves this measure at the cost
of greater perceptual path length and step-size variation.}
\label{tab:mnist}
\small
\setlength{\tabcolsep}{6pt}
\begin{tabular}{lccc}
\toprule
Method
& Feature FD $\downarrow$
& PPL $\downarrow$
& PDV\textsubscript{rel} $\downarrow$ \\
\midrule
Straight latent
& 265.218
& 104.667
& 0.166 \\
SBPRG
& 311.615
& \textbf{101.823}
& \textbf{0.120} \\
Adaptive GMM (ours)
& \textbf{144.432}
& 164.663
& 0.426 \\
\bottomrule
\end{tabular}
\end{table}

The adaptive model obtains the lowest classifier-feature Fr\'echet distance,
indicating that its interior samples are closer to the held-out image
distribution in the chosen feature space. Figure~\ref{fig:mnist-geodesics}
is consistent with this result: the adaptive sequences remain recognizable,
and the $6\to7$ example passes through a digit-4-like region.

The higher PPL and PDV\textsubscript{rel} show that the adaptive paths are
longer and less uniform in perceptual feature space. The results therefore
indicate a trade-off: the adaptive geometry favours paths through more
realistic intermediate images, whereas SBPRG produces shorter and more
uniform perceptual transitions.

The component-wise analysis retains 13 active components with effective
dimensions ranging from 17 to 56; reconstruction examples are provided in
Appendix~\ref{app:mnist-rae}.

\paragraph{Summary.}
In this run, the adaptive model improves transport conditioning and gives
the best measured support for one Wavy-X path while remaining competitive
for the other. Its BlueWhale representation supports a graph path closely
aligned with the metadata-defined reference trajectory. On MNIST, its
metric-energy paths improve interior-sample classifier-feature distance at
the cost of greater perceptual length and step-size variation. Together,
the experiments evaluate complementary aspects of the proposed geometry
using dataset-appropriate path procedures.

\section{Conclusion}
\label{sec:conclusion}

Extracting a tractable geometry from a multimodal statistical model requires
more than fitting the data distribution: the induced paths should pass
preferentially through high-likelihood regions while permitting stable
geometric computation. We addressed this challenge using an adaptive Gaussian-mixture latent distribution and a smooth expected-precision metric that recovers the single-Gaussian construction. For shared-mean Gaussian stars, we related geodesic log-concavity to precision covariance and local curvature.

Synthetic and controlled settings provide a necessary intermediate step
toward more complex applications because they allow transport distortion,
path support, and reference-path recovery to be evaluated directly. Across
Wavy-X, BlueWhale, and MNIST, the results show reduced transport distortion,
competitive path support, close reference-trajectory recovery, and improved interpolation realism.
The star--Ricci quantity is used here as a post-training diagnostic; studying its role as a regularizer on more complex data, together with more scalable flow architectures, provides a direction for future work.


\section*{AI Use Statement}

The authors used Claude (Anthropic) for text revision, proofreading, and code documentation. The authors assume responsibility for all content.

\FloatBarrier
\subsection*{Reproducibility statement}

The code used to produce all experimental results in this paper will be made publicly available soon. Implementation and evaluation details are provided in the appendix. Measurements are generated with a fixed seed to ensure consistent evaluation and reproducibility.

\bibliographystyle{unsrt}  
\bibliography{references}  

\clearpage
\appendix

\section{Related Work}
\label{app:related_work}

The central problem is not only to fit a flexible generative distribution,
but to extract from it a geometry that supports statistically meaningful and
computationally tractable operations. To further put this problem into
perspective, it is worthwhile to highlight that the quality of the resulting
geometry directly affects downstream tasks such as nonlinear interpolation,
dimension reduction, and optimization over learned data manifolds. Existing work addresses complementary parts of this problem: generative-geometric methods construct metrics from learned models, pullback methods improve scalability, and multimodal models
increase statistical expressiveness. We review these directions to clarify why their connection remains unresolved.

\paragraph{Data-driven geometry and downstream tasks.}
Deep generative models can induce Riemannian metrics that account for
distortion between latent and data space, providing parametrization-aware
distances and nonlinear interpolation paths
\cite{arvanitidis2018latent,
shao2018riemannian}. Once a reliable geometry has been learned, it can
support nonlinear representation learning through Riemannian autoencoders,
as well as barycentre computation, clustering, and optimization over learned
data manifolds
\cite{diepeveen2024scorepullback,diepeveen2025isoroptimization}.
These applications depend on the quality of the statistical and geometric
models: a metric that assigns low cost to unsupported directions can produce
misleading paths, representations, or optimization trajectories.
Diffusion-based geometry provides an alternative route to estimating
dimension, tangent spaces, curvature, and related quantities directly from
samples
\cite{jones2026manifold,jones2026metricmatching}. Together, these works
show why a reliable geometric foundation is necessary before downstream
manifold operations can be trusted.

\paragraph{Scalable pullback geometry.}
Early generative-geometric methods demonstrated the usefulness of
Riemannian latent spaces but can be expensive to evaluate in high dimensions
\cite{arvanitidis2018latent}. Recent pullback
constructions instead use invertible generative maps to perform geometric
computations in a tractable latent space and transfer them to data space
\cite{diepeveen2024pullback,diepeveen2024scorepullback}. SBPRG is the
closest foundation for our work: an anisotropic Gaussian base induces a
constant precision metric and supports scalable geodesic computation,
dimension estimation, and Riemannian autoencoding. Its statistical model,
however, remains unimodal. Consequently, scalability alone does not address
data containing multiple modes or local manifold-like structures.

\paragraph{Multimodal modelling and density-aware geometry.}
Gaussian-mixture priors have been used in variational autoencoders and
normalizing flows for clustering, semi-supervised learning, and density
estimation
\cite{dilokthanakul2016deep,
izmailov2020flowgmm,wang2023gcflow}. These methods improve statistical
expressiveness but do not by themselves specify the Riemannian metric needed
for geometric data analysis. Multi-chart flows use collections of local maps
to represent nontrivial structures
\cite{kalatzis2021multichart,yu2025multichart}; in contrast, we retain one
invertible flow and derive a spatially varying metric from its Gaussian-
mixture base.

A complementary line of work modifies distances or geometric mappings to
improve their relation to the data distribution. Density-based distances
increase the cost of low-probability regions
\cite{sorrenson2024distances}, while iso-Riemannian methods control
transport distortion and the reliability of geometric operations within a
global pullback construction
\cite{diepeveen2025isoriemannian}. Deformed star distributions provide a
closely related structured setting
\cite{diepeveen2026stararchetype}. These works improve transitions between
regions or control transport regularity; we address an earlier modelling
question: when the data contain multiple modes or local manifold-like
structures, what Riemannian metric should be extracted from the underlying
multimodal statistical model? We answer this by constructing a
positive-definite metric from a Gaussian-mixture base, analyzing its
compatibility with the density for a structured class, and learning the
active components adaptively.

\section{Notation and Geometric Background}
\label{app:geometric_notation}

This section collects the differential- and Riemannian-geometric notation
used throughout the paper. We follow the conventions of
\cite{diepeveen2026stararchetype}, extended here to include the curvature
quantities required in Section~\ref{sec:multimodal_geometry}.

\paragraph{Smooth maps and differentials.}
Let $\mathcal X$ and $\mathcal Z$ be smooth manifolds of dimension $d$,
with tangent spaces $T_x\mathcal X$ and $T_z\mathcal Z$. For a smooth map
$\phi:\mathcal X\to\mathcal Z$, its differential at $x$ is the linear map
\begin{equation}
D_x\phi:
T_x\mathcal X
\longrightarrow
T_{\phi(x)}\mathcal Z.
\end{equation}
Because the spaces considered in this paper are Euclidean, we identify
their tangent spaces with $\mathbb R^d$. In coordinates, $D_x\phi$ is the
Jacobian matrix of $\phi$ at $x$. For a smooth scalar function
$h:\mathbb R^d\to\mathbb R$, we write $\nabla h$ for its Euclidean
gradient and
\begin{equation}
D\nabla h
=
\nabla^2 h
\end{equation}
for its Hessian.

\paragraph{Riemannian metrics and curves.}
A Riemannian metric $g$ assigns to each point $z\in\mathcal Z$ a smoothly
varying inner product
\begin{equation}
g_z:
T_z\mathcal Z\times T_z\mathcal Z
\longrightarrow
\mathbb R.
\end{equation}
In Euclidean coordinates, we identify $g_z$ with a smooth
symmetric positive-definite matrix $g(z)$ and write
\begin{equation}
g_z(u,v)
=
u^\top g(z)v.
\end{equation}
For a continuously differentiable curve
$\gamma:[0,1]\to\mathcal Z$, its Riemannian length and energy are
\begin{align}
L_g(\gamma)
&=
\int_0^1
\sqrt{
\dot\gamma(t)^\top
g(\gamma(t))
\dot\gamma(t)
}
\,dt,
\\
E_g(\gamma)
&=
\frac{1}{2}
\int_0^1
\dot\gamma(t)^\top
g(\gamma(t))
\dot\gamma(t)
\,dt.
\end{align}
A geodesic between fixed endpoints is a locally length-minimizing curve;
under constant-speed parametrization it is equivalently a stationary point
of $E_g$.

\paragraph{Levi--Civita connection and geodesics.}
Every Riemannian metric determines a unique torsion-free,
metric-compatible connection $\nabla$, called the Levi--Civita connection.
In local coordinates, its Christoffel symbols are
\begin{equation}
\Gamma^k_{ij}
=
\frac{1}{2}
g^{k\ell}
\left(
\partial_i g_{j\ell}
+
\partial_j g_{i\ell}
-
\partial_\ell g_{ij}
\right),
\end{equation}
where $(g^{ij})=g^{-1}$ and repeated indices are summed. A smooth curve is
a geodesic when
\begin{equation}
\ddot\gamma^k(t)
+
\Gamma^k_{ij}(\gamma(t))
\dot\gamma^i(t)\dot\gamma^j(t)
=
0.
\end{equation}

\paragraph{Pullback geometry.}
Let $\phi:\mathcal X\to\mathcal Z$ be a diffeomorphism and let $g_Z$ be a
Riemannian metric on $\mathcal Z$. The pullback metric on $\mathcal X$ is
defined by
\begin{equation}
(\phi^*g_Z)_x(u,v)
=
(g_Z)_{\phi(x)}
\left(
D_x\phi[u],
D_x\phi[v]
\right).
\label{eq:appendix_pullback_definition}
\end{equation}
In matrix form,
\begin{equation}
g_X(x)
=
D_x\phi^\top
g_Z(\phi(x))
D_x\phi.
\label{eq:appendix_pullback_matrix}
\end{equation}
Thus, $\phi:(\mathcal X,g_X)\to(\mathcal Z,g_Z)$ is an isometry by
construction. In particular, the length of a curve in data space equals
the length of its image in latent space:
\begin{equation}
L_{g_X}(\gamma)
=
L_{g_Z}(\phi\circ\gamma).
\end{equation}
Consequently, latent geometric computations can be transferred to data
space using $\phi^{-1}$. In this paper, $\phi$ is the learned invertible
normalizing flow and
\begin{equation}
g_Z(z)=P(z)^2
\end{equation}
is the expected-precision metric.

\paragraph{Riemannian Hessian and geodesic log-concavity.}
For a smooth scalar function $h:\mathcal Z\to\mathbb R$, its Riemannian
Hessian is
\begin{equation}
\operatorname{Hess}_g h(X,Y)
=
g\left(
\nabla_X\operatorname{grad}_g h,
Y
\right).
\end{equation}
If $\gamma$ is a geodesic, then
\begin{equation}
\frac{d^2}{dt^2}h(\gamma(t))
=
\operatorname{Hess}_g h_{\gamma(t)}
\left(
\dot\gamma(t),
\dot\gamma(t)
\right).
\end{equation}
We say that a positive density $p$ is log-concave along a geodesic
$\gamma$ when
\begin{equation}
\frac{d^2}{dt^2}
\log p(\gamma(t))
\leq 0
\end{equation}
along the path. This is the notion of geodesic log-concavity used in
Section~\ref{sec:multimodal_geometry}.

\paragraph{Riemann curvature.}
We use the curvature convention
\begin{equation}
R(X,Y)Z
=
\nabla_X\nabla_Y Z
-
\nabla_Y\nabla_X Z
-
\nabla_{[X,Y]}Z.
\label{eq:curvature_convention}
\end{equation}
For linearly independent tangent vectors $u,v\in T_z\mathcal Z$, the
sectional curvature of the plane they span is
\begin{equation}
K_z(u,v)
=
\frac{
g_z(R(u,v)v,u)
}{
g_z(u,u)g_z(v,v)-g_z(u,v)^2
}.
\label{eq:sectional_curvature_definition}
\end{equation}
For the coordinate plane spanned by the $i$-th and $j$-th coordinate
vectors, we write this quantity as $K_{ij}(z)$.

The Ricci tensor is the trace of the curvature operator:
\begin{equation}
\operatorname{Ric}_z(u,v)
=
\operatorname{tr}
\left[
w\longmapsto R(w,u)v
\right].
\label{eq:ricci_definition}
\end{equation}
Equivalently, for any $g_z$-orthonormal basis
$\{e_1,\ldots,e_d\}$,
\begin{equation}
\operatorname{Ric}_z(u,v)
=
\sum_{a=1}^{d}
g_z\left(
R(e_a,u)v,e_a
\right).
\end{equation}
The Ricci tensor therefore contracts the collection of sectional
curvatures into a symmetric bilinear form with only $d^2$ coordinate
entries. Appendix~\ref{app:ricci_derivation} uses this contraction to obtain
a computationally tractable local curvature diagnostic.

\paragraph{Matrix notation.}
For the curvature convention in Equation~(\ref{eq:curvature_convention}),
we use the associated four-tensor
\begin{equation}
R(X,Y,Z,W)=g\!\left(R(X,Y)Z,W\right).
\end{equation}
For a square matrix $M$, we write
\begin{equation}
\operatorname{Sym}(M)
=
\frac{1}{2}(M+M^\top).
\end{equation}
The relations $M\succ0$ and $M\succeq0$ denote positive definiteness and
positive semidefiniteness, respectively. Expectations, variances, and
covariances indexed by $\gamma(z)$ are taken with respect to the discrete
distribution over mixture components with probabilities
$\{\gamma_k(z)\}_{k=1}^K$.

\section{Geometric Derivations}
\label{app:geometry_proofs}

Here we derive the geometric results used in
Section~\ref{sec:multimodal_geometry}.

\subsection{Gaussian-mixture score Hessian}
\label{app:score_hessian}

Let the mixture be
\(
p_Z(z)
=
\sum_{k=1}^{K}
w_k\mathcal N(z;\mu_k,A_k),
\) with \(
\ell_k(z)
=
\log\mathcal N(z;\mu_k,A_k),
\)
and define the component score function
\(
s_k(z)
=
\nabla\ell_k(z)
=
-A_k^{-1}(z-\mu_k).
\)
The responsibilities are, by definition,
\[
\gamma_k(z)
=
\frac{w_k e^{\ell_k(z)}}
{\sum_j w_j e^{\ell_j(z)}},
\]
and hence
\[
\nabla\log p_Z(z)
=
\sum_k\gamma_k(z)s_k(z).
\]

Differentiating responsibilities gives
\[
\nabla\gamma_k(z)
=
\gamma_k(z)
\left(
s_k(z)
-
\sum_j\gamma_j(z)s_j(z)
\right).
\]
Since \(Ds_k(z)=-A_k^{-1}\),
\begin{align}
D\nabla\log p_Z(z)
&=
\sum_k\gamma_k(z)Ds_k(z)
+
\sum_k s_k(z)\nabla\gamma_k(z)^\top
\nonumber\\
&=
-\sum_k\gamma_k(z)A_k^{-1}
+
\operatorname{Cov}_{\gamma(z)}[s_k(z)].
\label{eq:score_hessian_derivation}
\end{align}
Writing
\(
P(z)
=
\sum_k\gamma_k(z)A_k^{-1},
\)
we obtain
\[
D\nabla\log p_Z(z)
=
-P(z)
+
\operatorname{Cov}_{\gamma(z)}
\left[
\nabla\log\mathcal N(z;\mu_k,A_k)
\right].
\]

\subsection{Validity of the expected-precision metric}
\label{app:metric_validity}

\begin{proposition}
Suppose \(w_k>0\) and \(A_k\succ0\) for all \(k\). Then
\(
g_Z(z)
=
P(z)^2,
\) where \(
P(z)
=
\sum_{k=1}^{K}\gamma_k(z)A_k^{-1},
\)
defines a smooth Riemannian metric on \(\mathbb R^d\).
\end{proposition}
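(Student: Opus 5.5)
The plan is to verify three properties in turn: smoothness of \(z\mapsto g_Z(z)\), symmetry of each \(g_Z(z)\), and positive definiteness of each \(g_Z(z)\). Together these give a smooth Riemannian metric on \(\mathbb R^d\) in the sense of Appendix~\ref{app:geometric_notation}.

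\textbf{Smoothness.} Write \(\gamma_k(z)=w_k e^{\ell_k(z)}/\sum_j w_j e^{\ell_j(z)}\). Each \(\ell_k\) is a quadratic polynomial in \(z\) because \(A_k\succ0\) is invertible, so each \(e^{\ell_k}\) is smooth and strictly positive. The denominator is therefore smooth and strictly positive, since all \(w_j>0\). Hence each \(\gamma_k\) is \(C^\infty\). It follows that \(P(z)=\sum_k\gamma_k(z)A_k^{-1}\) is \(C^\infty\) entrywise, and so is \(P(z)^2\), whose entries are polynomials in the entries of \(P\).

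\textbf{Symmetry and positive definiteness.} Each \(A_k^{-1}\) is symmetric positive definite. The weights satisfy \(\gamma_k(z)>0\) and \(\sum_k\gamma_k(z)=1\), so \(P(z)\) is a convex combination of symmetric positive definite matrices. Therefore \(P(z)\) is symmetric, and for any \(v\neq0\) we have \(v^\top P(z)v=\sum_k\gamma_k(z)\,v^\top A_k^{-1}v>0\). So \(P(z)\succ0\).

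Since \(P(z)\) is symmetric, \(P(z)^2=P(z)^\top P(z)\) is symmetric. For \(v\neq0\), \(v^\top P(z)^2v=\|P(z)v\|_2^2\), which is strictly positive because \(P(z)\) is invertible. Equivalently, the eigenvalues of \(P(z)^2\) are squares of positive numbers. Hence \(g_Z(z)\succ0\) for every \(z\).

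This step is the only place where the argument genuinely uses that the covariance term from Appendix~\ref{app:score_hessian} has been dropped. Without that term, there is no cancellation that could make the matrix singular.

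\textbf{Optional uniform bound.} If one also wants uniform bounds, the smallest eigenvalue satisfies \(\lambda_{\min}(P(z))\ge\min_k\lambda_{\min}(A_k^{-1})>0\), by concavity of \(\lambda_{\min}\) over convex combinations. This gives
\[
\lambda_{\min}(g_Z(z))\ge\Bigl(\min_k\lambda_{\max}(A_k)\Bigr)^{-2}
\]
uniformly in \(z\). This is not required for the statement.

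I do not expect a real obstacle here. The only point needing care is checking strict positivity of the responsibilities everywhere, which rests on \(w_k>0\) and on the Gaussian densities never vanishing. That strict positivity is what guarantees both that the denominator is nonzero, which gives smoothness, and that the convex combination is nondegenerate.
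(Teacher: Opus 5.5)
Your argument is correct and is essentially the paper's proof: the responsibilities are smooth because the Gaussians and weights are strictly positive, \(P(z)\succ0\) as a responsibility-weighted sum of precisions, and \(v^\top P(z)^2v=\|P(z)v\|_2^2>0\) by symmetry. One slip in your optional remark (not needed for the statement): since \(\lambda_{\min}(A_k^{-1})=1/\lambda_{\max}(A_k)\), the uniform bound should read \(\lambda_{\min}(g_Z(z))\ge(\max_k\lambda_{\max}(A_k))^{-2}\) rather than using \(\min_k\); for example, with \(d=1\), \(A_1=1\), \(A_2=100\), one has \(g_Z(z)\to10^{-4}\) as \(|z|\to\infty\), which violates your stated bound of \(1\).
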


\begin{proof}
As each Gaussian is smooth and strictly positive, \(p_Z(z)>0\) and the
responsibilities \(\gamma_k(z)\) are smooth. Thus \(P\) and \(g_Z=P^2\) are
smooth.
For \(v\neq0\),
\(
v^\top P(z)v
=
\sum_k\gamma_k(z)\,
v^\top A_k^{-1}v
>
0,
\)
hence \(P(z)\succ0\). Since \(P(z)\) is symmetric,
\(
v^\top g_Z(z)v
=
v^\top P(z)^2v
=
\|P(z)v\|_2^2
>
0.
\)
Hence \(g_Z(z)\succ0\).
\end{proof}

\begin{remark}[Single component case]
For \(K=1\) we have
\(
P(z)=A_1^{-1},
\) and \(
g_Z(z)=A_1^{-2},
\)
which is the single-Gaussian score-based pullback metric.
\end{remark}

\subsection{Gaussian stars}
\label{app:star_glc_proof}

Assume that all component means coincide at the origin and that the component
precisions share an eigenbasis. In this basis,
\(
A_k^{-1}
=
\operatorname{diag}(q_{k,1},\ldots,q_{k,d}).
\)
Now, define
\(
\rho_i(z)
=
\mathbb E_{\gamma(z)}[q_{k,i}],
\) and \(
C_{ij}(z)
=
\operatorname{Cov}_{\gamma(z)}
[q_{k,i},q_{k,j}]\).
Then
\(
P(z)
=
\operatorname{diag}(\rho_1,\ldots,\rho_d),
\) and \(
g_{ii}(z)
=
\rho_i(z)^2.
\)

\begin{lemma}
For a Gaussian star,
\(
\partial_j\log p_Z(z)
=
-\rho_j(z)z_j,
\) and \(
\partial_j\rho_i(z)
=
-z_j C_{ij}(z).
\)
Consequently,
\(
\partial_{ij}\log p_Z(z)
=
-\rho_i(z)\delta_{ij}
+
C_{ij}(z)z_i z_j.
\)
\end{lemma}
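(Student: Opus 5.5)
The three identities can be obtained by specialising the general formulas of Appendix~\ref{app:score_hessian} to the star setting. No new machinery is needed. In the shared eigenbasis, with all $\mu_k=0$, the component scores are $s_k(z)=-A_k^{-1}z$. In coordinates this reads $(s_k(z))_j=-q_{k,j}z_j$.

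\textbf{Step 1: the gradient.} Since $\nabla\log p_Z=\sum_k\gamma_k s_k$, the $j$-th entry is
\[
\partial_j\log p_Z(z)=-\sum_k\gamma_k(z)q_{k,j}z_j=-\rho_j(z)z_j.
\]

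\textbf{Step 2: derivatives of $\rho_i$.} We use the responsibility derivative $\nabla\gamma_k=\gamma_k\bigl(s_k-\sum_l\gamma_l s_l\bigr)$ established above. Its $j$-th component is
\[
\partial_j\gamma_k=\gamma_k\bigl(-q_{k,j}z_j+\rho_j z_j\bigr)=-z_j\gamma_k\,(q_{k,j}-\rho_j).
\]
Because the $q_{k,i}$ are constants, this gives
\[
\partial_j\rho_i=\sum_k q_{k,i}\,\partial_j\gamma_k=-z_j\sum_k\gamma_k\,q_{k,i}(q_{k,j}-\rho_j).
\]
The remaining sum equals $\mathbb E_\gamma[q_{k,i}q_{k,j}]-\rho_i\rho_j=C_{ij}(z)$, since $\sum_k\gamma_k(q_{k,j}-\rho_j)=0$. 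Hence $\partial_j\rho_i=-z_jC_{ij}$.

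\textbf{Step 3: the Hessian.} Apply the product rule to $\partial_i\log p_Z=-\rho_i z_i$ and differentiate with respect to $z_j$:
\[
\partial_{ij}\log p_Z=-\rho_i\delta_{ij}-z_i\,\partial_j\rho_i=-\rho_i\delta_{ij}+C_{ij}z_iz_j.
\]
As a consistency check, the result can also be read off Equation~(\ref{eq:score_hessian_derivation}). There $-P=-\operatorname{diag}(\rho)$, and
\[
\operatorname{Cov}_\gamma[s_k]_{ij}=z_iz_j\operatorname{Cov}_\gamma[q_{k,i},q_{k,j}],
\]
because $(s_k)_i=-q_{k,i}z_i$ is a scalar multiple of $q_{k,i}$ with the same factor across all $k$. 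Both derivations agree.

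The only step needing care is the covariance identification in Step 2. One has to subtract the mean $\rho_j$ inside the expectation and notice that the $\rho_i$ factor drops out because the centred weights sum to zero. Everything else is direct substitution.
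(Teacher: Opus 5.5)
Your proof is correct and follows the paper's argument essentially step for step. Both average the coordinatewise component scores $-q_{k,j}z_j$ for the gradient, sum $\partial_j\gamma_k=-\gamma_k(q_{k,j}-\rho_j)z_j$ against $q_{k,i}$ to obtain $-z_jC_{ij}$, and apply the product rule to $-\rho_i z_i$ for the Hessian; your cross-check against the general score-Hessian formula is a harmless extra.
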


\begin{proof}
The \(j\)-th coordinate of the \(k\)-th component score is
\(
\partial_j\ell_k(z)
=
-q_{k,j}z_j.
\)
Averaging over responsibilities gives
\(
\partial_j\log p_Z(z)
=
-\rho_j(z)z_j.
\)
Moreover,
\(
\partial_j\gamma_k(z)
=
-\gamma_k(z)
\bigl(q_{k,j}-\rho_j(z)\bigr)z_j.
\)
Therefore
\[
\partial_j\rho_i(z)
=
\sum_kq_{k,i}\partial_j\gamma_k(z)
=
-z_j
\operatorname{Cov}_{\gamma(z)}
[q_{k,i},q_{k,j}],
\]
and differentiating
\(\partial_i\log p_Z=-\rho_i z_i\)
gives the last identity.
\end{proof}

\subsubsection{Geodesic log-concavity}

\addtocounter{theorem}{-1}
\begin{theorem}[Geodesic log-concavity for Gaussian stars (restated)]
\label{thm:star-glc-proof}
Let \(z(t)\) be an affinely parameterized geodesic of \(g_Z=P^2\), with
\(u=\dot z\). Then
\begin{equation}
\frac{\mathrm d^2}{\mathrm dt^2}
\log p_Z(z(t))
=
-u^\top P(z)u
+
\sum_{1\leq i<j\leq d}
C_{ij}(z)
\left(
\sqrt{\frac{\rho_i(z)}{\rho_j(z)}}\,u_i z_j
-
\sqrt{\frac{\rho_j(z)}{\rho_i(z)}}\,z_i u_j
\right)^2.
\label{eq:star_glc_condition_proof}
\end{equation}
\end{theorem}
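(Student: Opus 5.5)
Along an affinely parameterized geodesic, the second derivative of \(\log p_Z\) equals the Riemannian Hessian evaluated on the velocity. I would therefore compute
\[
\frac{\mathrm d^2}{\mathrm dt^2}\log p_Z(z(t))
=\sum_{i,j}\partial_{ij}\log p_Z\,u_iu_j+\sum_k\partial_k\log p_Z\,\ddot z_k ,
\]
and substitute \(\ddot z_k=-\Gamma^k_{ij}u_iu_j\) from the geodesic equation. All derivatives are then supplied by the preceding Lemma: \(\partial_k\log p_Z=-\rho_kz_k\), \(\partial_j\rho_i=-z_jC_{ij}\), and \(\partial_{ij}\log p_Z=-\rho_i\delta_{ij}+C_{ij}z_iz_j\).

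The first step is the Christoffel symbols of the diagonal metric \(g_{ii}=\rho_i^2\). Because the metric is diagonal,
\[
\Gamma^k_{ij}=\tfrac{1}{2\rho_k^2}\bigl(\partial_i g_{jk}+\partial_j g_{ik}-\partial_k g_{ij}\bigr),
\]
and only index patterns with repetitions survive. Using \(\partial_j(\rho_i^2)=-2\rho_iz_jC_{ij}\), the surviving symbols are:
\[
\Gamma^k_{kk}=-\frac{z_kC_{kk}}{\rho_k},\qquad
\Gamma^k_{kj}=\Gamma^k_{jk}=-\frac{z_jC_{kj}}{\rho_k}\ (j\neq k),\qquad
\Gamma^k_{jj}=\frac{\rho_jz_kC_{jk}}{\rho_k^2}\ (j\neq k).
\]

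Next I would contract with the gradient. The correction term is \(\sum_k\rho_kz_k\,\Gamma^k_{ij}u_iu_j\). Inserting the symbols gives, for each \(k\), three contributions:
\[
-z_k^2C_{kk}u_k^2
\;-\;2\sum_{j\neq k}z_kz_jC_{kj}u_ku_j
\;+\;\sum_{j\neq k}\frac{\rho_j}{\rho_k}z_k^2C_{jk}u_j^2 .
\]
The Euclidean Hessian part contributes \(-\sum_i\rho_iu_i^2+\sum_{i,j}C_{ij}z_iz_ju_iu_j\).

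Adding the two parts, the diagonal terms \(C_{kk}z_k^2u_k^2\) cancel exactly. The mixed off-diagonal terms combine to \(-\sum_{i\neq j}C_{ij}z_iz_ju_iu_j\), since the Hessian gives \(+1\) times this sum and the Christoffel part gives \(-2\) times it. The remaining terms are \(\sum_{j\neq k}\frac{\rho_j}{\rho_k}C_{jk}z_k^2u_j^2\).

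Grouping each unordered pair \(i<j\), the coefficient of \(C_{ij}\) becomes
\[
\frac{\rho_i}{\rho_j}u_i^2z_j^2+\frac{\rho_j}{\rho_i}u_j^2z_i^2-2u_iu_jz_iz_j ,
\]
which is exactly the perfect square in the statement. Together with \(-u^\top Pu\), this yields the claimed formula.

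The main obstacle is sign and factor bookkeeping in the Christoffel contraction. In particular, I would double-check that the mixed term gets the factor \(2\) from the symmetry \(\Gamma^k_{kj}=\Gamma^k_{jk}\), and that the \(C_{kk}\) terms cancel rather than double. I would verify both with a quick check in \(d=2\).
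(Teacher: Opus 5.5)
Your proposal is correct and follows essentially the same route as the paper. Both compute the Christoffel symbols of the diagonal metric $g_{ii}=\rho_i^2$ via $\partial_j\rho_i=-z_jC_{ij}$, add the connection correction $-\partial_\ell\log p_Z\,\Gamma^\ell_{ab}u_au_b$ to the Euclidean Hessian, cancel the diagonal $C_{kk}$ terms, net the mixed terms to $-\sum_{i\neq j}C_{ij}z_iz_ju_iu_j$, and group each pair $i<j$ into the perfect square. The only difference is cosmetic: you list the nonzero symbols case by case, whereas the paper writes them in a single Kronecker-delta formula.
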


\begin{proof}
For the diagonal metric \(g_{ii}=\rho_i^2\), clearly
\(
g^{ii}
=
\rho_i^{-2}.
\)
The Christoffel symbols are now
\(
\Gamma^\ell_{ab}
=
\delta_{b\ell}
\frac{\partial_a\rho_\ell}{\rho_\ell}
+
\delta_{a\ell}
\frac{\partial_b\rho_\ell}{\rho_\ell}
-
\delta_{ab}
\frac{\rho_a\partial_\ell\rho_a}{\rho_\ell^2}.
\)
Set \(f=\log p_Z\). Along geodesics in affine parametrization,
\(
\frac{\mathrm d^2}{\mathrm dt^2}f(z(t))
=
u^\top\nabla^2 f\,u
-
\partial_\ell f\,
\Gamma^\ell_{ab}u_a u_b.
\)
The density Hessian yields
\[
u^\top\nabla^2 f\,u
=
-\sum_i\rho_i u_i^2
+
\sum_{a,b}C_{ab}z_a z_bu_a u_b.
\]
Using
\(
\partial_\ell f=-\rho_\ell z_\ell
\)
and
\(
\partial_a\rho_\ell=-z_aC_{\ell a}
\),
the connection term yields
\[
-\partial_\ell f\,
\Gamma^\ell_{ab}u_a u_b
=
-2\sum_{a,b}C_{ab}z_a z_bu_a u_b
+
\sum_{a,b}
C_{ab}
\frac{\rho_a}{\rho_b}
z_b^2u_a^2,
\]
and, hence,
\[
\frac{\mathrm d^2}{\mathrm dt^2}f(z(t))
=
-\sum_i\rho_i u_i^2
+
\sum_{a,b}
C_{ab}
\frac{\rho_a}{\rho_b}
z_b^2u_a^2
-
\sum_{a,b}
C_{ab}z_a z_bu_a u_b.
\]
The diagonal covariance terms cancel. Each off-diagonal pair $i<j$
contributes
\[
C_{ij}
\left(
\frac{\rho_i}{\rho_j}u_i^2z_j^2
+
\frac{\rho_j}{\rho_i}u_j^2z_i^2
-
2u_i u_jz_i z_j
\right),
\]
or, equivalently,
\[
C_{ij}
\left(
\sqrt{\frac{\rho_i}{\rho_j}}\,u_i z_j
-
\sqrt{\frac{\rho_j}{\rho_i}}\,z_i u_j
\right)^2.
\]
Substituting gives Equation~(\ref{eq:star_glc_condition_proof}).
\end{proof}

If $C_{ij}(z)\leq0$ for every $i\neq j$, each pairwise correction in
Equation~(\ref{eq:star_glc_condition_proof}) is nonpositive; hence the
log-density is concave along every geodesic.

In two dimensions, let \(C=C_{12}\). For \(C>0\), the correction term is the rank-one quadratic form
\(
C(a^\top u)^2,
\) where \(
a
=
\begin{pmatrix}
\sqrt{\rho_1/\rho_2}\,z_2\\[2mm]
-\sqrt{\rho_2/\rho_1}\,z_1
\end{pmatrix}.
\)
Satisfying geodesic log-concavity for all velocities \(u\) is hence equivalent to
\(
Caa^\top
\preceq
P.
\)
Since \(P\succ0\), this is again equivalent to
\(
C\,a^\top P^{-1}a
\leq1,
\)
which gives
\[
\left(
\frac{z_1^2}{\rho_1(z)}
+
\frac{z_2^2}{\rho_2(z)}
\right)
C_{12}(z)
\leq1.
\]

The coordinate axes provide an additional case in which strict geodesic
log-concavity holds.

\begin{proposition}[Coordinate-axis geodesics]
\label{prop:axis_geodesics}
For a diagonalized Gaussian star, each coordinate axis is invariant under the
geodesic flow of \(g_Z=P^2\). Every nonconstant geodesic contained in the
\(r\)-th coordinate axis satisfies
\[
\frac{\mathrm d^2}{\mathrm dt^2}
\log p_Z(z(t))
=
-\rho_r(z(t))\dot z_r(t)^2
<0.
\]
\end{proposition}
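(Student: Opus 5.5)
Invariance of the coordinate axes will follow from the geodesic equation and a symmetry of the Gaussian star; the second-derivative identity is then the specialization of Theorem~\ref{thm:star-glc-proof} to curves with a single nonzero coordinate.

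\textbf{Step 1: reflection symmetry.} Fix \(r\) and a coordinate \(s\neq r\), and let \(\sigma_s\) be the reflection \(z_s\mapsto -z_s\). Each component \(\mathcal N(z;0,A_k)\) has diagonal precision and zero mean, so it is invariant under \(\sigma_s\). Hence the responsibilities \(\gamma_k\), the means \(\rho_i\), and the metric \(g_Z=\operatorname{diag}(\rho_i^2)\) are all \(\sigma_s\)-invariant, and \(\sigma_s\) is an isometry of \(g_Z\). The fixed-point set of an isometry is totally geodesic. Intersecting over all \(s\neq r\), the \(r\)-th axis is totally geodesic.

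\textbf{Step 1, alternative check.} One can instead argue directly with the Christoffel symbols from the proof of Theorem~\ref{thm:star-glc-proof}. On the axis, \(z_a=0\) for \(a\neq r\), and the Lemma gives \(\partial_a\rho_\ell=-z_aC_{\ell a}\). So the only nonvanishing derivatives are \(\partial_r\rho_\ell\). For a velocity \(u=\dot z_r e_r\), the acceleration component \(\ddot z_\ell\) with \(\ell\neq r\) equals \(-\Gamma^\ell_{rr}\dot z_r^2\). Here \(\Gamma^\ell_{rr}=-\rho_r\partial_\ell\rho_r/\rho_\ell^2\) involves \(\partial_\ell\rho_r=-z_\ell C_{r\ell}=0\). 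Thus the system restricted to the axis is consistent: the ODE \(\ddot z_r=-\Gamma^r_{rr}\dot z_r^2\) with the other coordinates held at zero solves the full geodesic equation. By uniqueness of geodesics, any geodesic starting on the axis with axial velocity stays on it. I would present this direct check, perhaps with the symmetry argument as a remark.

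\textbf{Step 2: the second derivative.} Substitute \(z=z_re_r\) and \(u=\dot z_re_r\) into Equation~(\ref{eq:star_glc_condition_proof}). Each pairwise term \(\sqrt{\rho_i/\rho_j}\,u_iz_j-\sqrt{\rho_j/\rho_i}\,z_iu_j\) requires two distinct nonzero indices among \(u\) and \(z\), so all such terms vanish. What remains is \(-u^\top Pu=-\rho_r(z(t))\dot z_r(t)^2\).

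\textbf{Step 3: strictness.} For a nonconstant geodesic the speed \(g(u,u)=\rho_r^2\dot z_r^2\) is a nonzero constant, so \(\dot z_r(t)\neq0\) for all \(t\). Also \(\rho_r>0\), since every \(q_{k,r}>0\) and the responsibilities are positive. Hence the second derivative is strictly negative.

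\textbf{Main obstacle.} The delicate point is the invariance claim, which needs uniqueness of geodesics together with the vanishing of the off-axis Christoffel contributions. This rests on \(\partial_\ell\rho_r\propto z_\ell\) from the Lemma.
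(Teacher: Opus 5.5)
Your proposal is correct, and your invariance argument is the paper's: $\Gamma^\ell_{rr}=-\rho_r\partial_\ell\rho_r/\rho_\ell^2$ vanishes on the axis because $\partial_\ell\rho_r=-z_\ell C_{r\ell}$, and uniqueness of geodesics finishes it.

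You differ from the paper in how you obtain the identity.

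\textbf{The paper's route.} It restricts to the axis and works with the one-dimensional metric $\rho(s)^2\,ds^2$, whose geodesic equation is $\ddot s+(\rho'/\rho)\dot s^2=0$. Differentiating $\frac{\mathrm d}{\mathrm dt}\log p_Z=-\rho s\dot s$ gives a term $-s\rho'\dot s^2$ from the Euclidean Hessian. This cancels exactly against $-\rho s\ddot s=s\rho'\dot s^2$ coming from the geodesic equation.

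\textbf{Your route.} You specialize Theorem~\ref{thm:star-glc-proof}, observing that every pairwise term $\sqrt{\rho_i/\rho_j}\,u_iz_j-\sqrt{\rho_j/\rho_i}\,z_iu_j$ needs two distinct nonzero indices. This is shorter and shows directly that the axis case is exactly where the precision-covariance correction drops out. The cost is dependence on the full theorem, whereas the paper's computation is self-contained.

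\textbf{The symmetry remark.} Your reflection argument is a genuinely different proof of invariance. Each $\sigma_s$ is an isometry because the $\rho_i$ are even in $z_s$ and $g_Z$ is diagonal, and the fixed set of an isometry is totally geodesic. This avoids Christoffel symbols entirely. It also gives more than the proposition: every coordinate subspace through the origin is the fixed set of a group of such reflections, hence totally geodesic. That is what makes restricting to a coordinate plane meaningful along whole geodesics, not just pointwise.

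\textbf{Strictness.} Your Step~3 spells out a point the paper leaves implicit. A nonconstant geodesic has constant speed $\rho_r|\dot z_r|\neq0$, so $\dot z_r(t)\neq0$ for every $t$. The paper uses this without comment when it concludes strict negativity from $\rho>0$ alone.
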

\begin{proof}
Let \(e_r\) be the $r$-th standard basis vector and set \(z=se_r\). For
\(\ell\neq r\),
\[
\Gamma^\ell_{rr}
=
-\frac{\rho_r\partial_\ell\rho_r}{\rho_\ell^2}.
\]
Since
\(
\partial_\ell\rho_r
=
-z_\ell
\operatorname{Cov}_{\gamma(z)}[q_{k,r},q_{k,\ell}],
\)
we have \(\Gamma^\ell_{rr}=0\) on the \(r\)-th axis. A geodesic initialized
on the axis with velocity parallel to \(e_r\) therefore remains on it.
Along this geodesic, write \(\rho=\rho_r(s)\). The restricted metric and
log-density derivative are
\(
g=\rho(s)^2\,ds^2,
\) and \(
\frac{\mathrm d}{\mathrm ds}\log p_Z(s)
=
-\rho(s)s
\) respectively.
The geodesic equation is
\(
\ddot s
+
\frac{\rho'(s)}{\rho(s)}\dot s^2
=
0,
\)
Hence we simplify via exact cancellation to
\begin{align*}
\frac{\mathrm d^2}{\mathrm dt^2}\log p_Z(s(t))
&=
\bigl[-\rho(s)-s\rho'(s)\bigr]\dot s^2
-\rho(s)s\ddot s\\
&=
-\rho(s)\dot s^2,
\end{align*}
and because \(\rho(s)>0\), the log-density is strictly concave on
nonconstant coordinate-axis geodesics.
\end{proof}

\subsubsection{Sectional curvature}
\label{app:sectional_curvature_centre}
At the common centre, define
\(
R_{ijij}(0)=R_0(e_i,e_j,e_j,e_i).
\)
Then
\(
\partial_j\rho_i(0)
=
0.
\)
All first metric derivative hence vanish at \(z=0\). Differentiating
\(
\partial_j\rho_i=-z_jC_{ij}
\)
once again yields
\(
\partial_\ell\partial_j\rho_i(0)
=
-\delta_{j\ell}C_{ij}(0).
\)
As \(g_{ii}=\rho_i^2\),
\(
\partial_j^2 g_{ii}(0)
=
-2\rho_i C_{ij},
\)
with \(\rho_i\) and \(C_{ij}\) taken at the origin.
Under the curvature convention
\(
R_{ijij}(0)
=
-\frac12
\left(
\partial_j^2g_{ii}(0)
+
\partial_i^2g_{jj}(0)
\right),
\)
we now get
\(
R_{ijij}(0)
=
(\rho_i+\rho_j)C_{ij}.
\)
Since
\(
g_{ii}(0)g_{jj}(0)=\rho_i^2\rho_j^2
\),
the sectional curvature of the coordinate \((i,j)\)-plane is
\[
K_{ij}(0)
=
\frac{\rho_i+\rho_j}
{\rho_i^2\rho_j^2}
\operatorname{Cov}_{\gamma(0)}
[q_{k,i},q_{k,j}].
\]
Hence the pairwise precision covariance governing the adjustment to geodesic log-concavity also
determines the coordinate sectional curvature at a centre.

\subsubsection{Local expansion of geodesic log-density}
\label{app:local_glc_expansion}
In coordinates normalized so that
\(
P(0)=I,
\)
abbreviate
\(
C_{ij}^0
=
C_{ij}(0).
\)
Let $R_0$ denote the Riemann four-tensor evaluated at the origin.
As a (centered) Gaussian star is even in \(z\),
\(
C_{ij}(z)
=
C_{ij}^0
+
\mathcal O(\|z\|^2).
\)
Moreover,
\(
\partial_j\rho_i(0)=0,
\) and \(
\partial_\ell\partial_j\rho_i(0)
=
-\delta_{j\ell}C_{ij}^0.
\)
If we now take the Taylor expansion,
\(
\rho_i(z)
=
1
-
\frac12
\sum_j C_{ij}^0 z_j^2
+
\mathcal O(\|z\|^4).
\)
This means that the first term of Equation~(\ref{eq:star_glc_condition_proof})
yields
\[
-u^\top P(z)u
=
-\|u\|_2^2
+
\frac12
\sum_{i,j}
C_{ij}^0u_i^2z_j^2
+
\mathcal O(\|z\|^4\|u\|^2).
\]
Meanwhile, the pairwise term satisfies
\[
\sum_{i<j}
C_{ij}(z)
\left(
\sqrt{\frac{\rho_i}{\rho_j}}\,u_i z_j
-
\sqrt{\frac{\rho_j}{\rho_i}}\,z_i u_j
\right)^2
=
\sum_{i<j}
C_{ij}^0
(u_i z_j-u_j z_i)^2
+
\mathcal O(\|z\|^4\|u\|^2).
\]

Since, at \(P(0)=I\),
\(
R_{ijij}(0)=2C_{ij}^0,
\)
we hence obtain
\(
\frac12R_0(z,u,u,z)
=
\sum_{i<j}
C_{ij}^0
(u_i z_j-u_j z_i)^2.
\)
Combining expansions results in
\[
\frac{\mathrm d^2}{\mathrm dt^2}
\log p_Z(z(t))
=
-\|u\|_2^2
+
\frac12
\sum_{i,j}
C_{ij}^0u_i^2z_j^2
+
\frac12R_0(z,u,u,z)
+
\mathcal O(\|z\|^4\|u\|^2).
\]
The quadratic correction separates into the local variation of the expected
precision and the curvature contribution generated by cross-coordinate
precision covariance.

\subsubsection{Ricci contraction}
\label{app:ricci_derivation}

For a shared-mean Gaussian mixture centred at the origin, we can derive the
Ricci tensor without assuming that the component precisions share an
eigenbasis. For brevity, let
\(
Q_k=A_k^{-1},
P=P(0)=\mathbb E_{\gamma(0)}[Q_k],
\) and \(
L_k=Q_k-P.
\)
Every component score vanishes at \(z=0\), so
\(
D\gamma_k(0)=0.
\)
Differentiating, once again, gives, for \(u,v\in\mathbb R^d\),
\[
D^2\gamma_k(0)[u,v]
=
-\gamma_k(0)\,
u^\top L_kv.
\]
Also, since
\(
\mathbb E_{\gamma(0)}[L_k]=0
\),
\[
D^2P(0)[u,v]
=
-\mathbb E_{\gamma(0)}
\left[
L_k\,
(u^\top L_kv)
\right].
\]
Because \(DP(0)=0\) and \(g=P^2\),
\[
D^2g(0)[u,v]
=
P\,D^2P(0)[u,v]
+
D^2P(0)[u,v]P.
\]
Define
\(
S_k=PL_k+L_kP.
\)
In coordinates,
\(
g_{ij,ab}(0)
=
-\mathbb E_{\gamma(0)}
\left[
S_{k,ij}L_{k,ab}
\right].
\)
At a point with \(\partial_ag_{ij}=0\), the Ricci tensor is
\[
\operatorname{Ric}_{jl}
=
\frac12g^{ik}
\left(
g_{il,jk}
+
g_{jk,il}
-
g_{ik,jl}
-
g_{jl,ik}
\right).
\]
Substituting \(g^{-1}(0)=P^{-2}\) and the preceding expression for
\(g_{ij,ab}\), the contribution of a fixed component is
\[
\frac12
\left[
\operatorname{Tr}(P^{-2}S_k)L_k
+
\operatorname{Tr}(P^{-2}L_k)S_k
-
L_kP^{-2}S_k
-
S_kP^{-2} L_k
\right].
\]

For the last few steps here, let
\(
C_k
=
P^{-1}Q_k-I
=
P^{-1} L_k,
\) and, as \( L_k\) is symmetric, notice that
\(
PC_k
=
C_k^\top P.
\)
Further notice the identities
\(
\operatorname{Tr}(P^{-2}S_k)
=
2\operatorname{Tr}(C_k),
\) and \(
\operatorname{Tr}(P^{-2} L_k)
=
\operatorname{Tr}(P^{-1}C_k).
\)
Applying these to the above contraction yields
\[
\boxed{
\begin{aligned}
\operatorname{Ric}(0)
={}&
\mathbb E_{\gamma(0)}
\left[
\operatorname{Tr}(C_k)PC_k
+
\operatorname{Tr}(P^{-1}C_k)
P\operatorname{Sym}(C_k)P
\right.\\[-1mm]
&\left.\hspace{19mm}
-
PC_k^2
-
\operatorname{Sym}(PC_k^\top C_k)
\right].
\end{aligned}
}
\]
For diagonalised component precisions, we get an analogous form,
where, by letting
\(
c_{k,i}
=
\frac{q_{k,i}}{\rho_i}-1,
a_k
=
\sum_j c_{k,j},
\) and \(
b_k
=
\sum_j\frac{c_{k,j}}{\rho_j},
\)
the Ricci tensor is
\[
\operatorname{Ric}_{ii}(0)
=
\mathbb E_{\gamma(0)}
\left[
\rho_i c_{k,i}a_k
+
\rho_i^2 c_{k,i}b_k
-
2\rho_i c_{k,i}^2
\right].
\]
The vectors \(c_k\), together with \(a_k\) and \(b_k\), can be accumulated
component-wise, giving all diagonal Ricci entries in \(\mathcal O(Kd)\) time.
This can also be seen from the full general expression by noting that the only operations are matrix multiplications and inverses of diagonal matrices, traces, and sums.

\subsection{Chain-shaped densities}
\label{app:chain_mixtures}

Suppose that all mixture components share covariance \(A\), with means
\(
\mu_k=\alpha_k v
\)
along a fixed nonzero direction \(v\). The expected precision is constant,
\(
P(z)=\sum_k\gamma_k(z)A^{-1}=A^{-1},
\)
so \(g_Z=A^{-2}\) and its geodesics are affine lines.
Let
\(
\sigma_\alpha^2(z)
=
\operatorname{Var}_{\gamma(z)}(\alpha_k).
\)
The component scores satisfy
\(
s_k(z)
=
-A^{-1}z+\alpha_kA^{-1}v,
\)
and Equation~(\ref{eq:score_hessian_derivation}) gives
\[
\nabla^2\log p_Z(z)
=
-A^{-1}
+
\sigma_\alpha^2(z)
(A^{-1}v)(A^{-1}v)^\top.
\]
Hence, for a geodesic with velocity \(u\),
\[
\frac{\mathrm d^2}{\mathrm dt^2}
\log p_Z(z(t))
=
-u^\top A^{-1}u
+
\sigma_\alpha^2(z)
(u^\top A^{-1}v)^2.
\]

Geodesic log-concavity at \(z\) therefore holds in every tangent direction
iff
\[
\sigma_\alpha^2(z)
\sup_{u\neq0}
\frac{
u^\top
(A^{-1}v)(A^{-1}v)^\top
u
}{
u^\top A^{-1}u
}
\leq1.
\]
The supremum can be recognized as a generalized Rayleigh quotient.
Since the numerator is of rank one,
\[
\lambda_{\max}
=
(A^{-1}v)^\top A(A^{-1}v)
=
v^\top A^{-1}v
=
\|v\|_{A^{-1}}^2.
\]
Here, $\|v\|_{A^{-1}}^2:=v^\top A^{-1}v$.
Thus
\[
\operatorname{Var}_{\gamma(z)}(\alpha_k)
\leq
\|v\|_{A^{-1}}^{-2}.
\]
The maximizing generalized eigenvector is parallel to \(v\), so the chain
direction is the first direction in which geodesic log-concavity fails as the
responsibility-weighted spread of component means increases.

This example shows that multimodality does not necessarily require a
non-flat metric. Shared-covariance mixtures induce the same constant metric,
yet their geodesic log-concavity can differ because it depends jointly on
the density and the metric. The following proposition generalizes the
one-dimensional chain construction.

\begin{proposition}[Shared-covariance mixtures]
Suppose \(A_k=A\) for every component and let
\(
\Sigma_\mu(z)
=
\operatorname{Cov}_{\gamma(z)}[\mu_k].
\)
Then \(g_Z=A^{-2}\), and geodesic log-concavity holds in every tangent
direction at \(z\) iff
\(
\lambda_{\max}
\left(
A^{-1/2}\Sigma_\mu(z)A^{-1/2}
\right)
\leq1.
\)
\end{proposition}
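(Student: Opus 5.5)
The plan is to follow the chain-shaped computation step by step, replacing the scalar variance $\sigma_\alpha^2$ with the full mean covariance $\Sigma_\mu$. First, since $A_k=A$ for every $k$, the expected precision reduces to $P(z)=\sum_k\gamma_k(z)A^{-1}=A^{-1}$, because the responsibilities sum to one. This gives $g_Z=A^{-2}$ directly. The metric is then constant, so all Christoffel symbols vanish, geodesics are affine lines $z(t)=z_0+tu$, and
\[
\frac{\mathrm d^2}{\mathrm dt^2}\log p_Z(z(t))=u^\top\nabla^2\log p_Z(z)\,u .
\]
Geodesic log-concavity at $z$ in every tangent direction is therefore equivalent to $\nabla^2\log p_Z(z)\preceq0$.

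Next, I will compute the Hessian from Equation~(\ref{eq:score_hessian_derivation}), namely $\nabla^2\log p_Z=-P+\operatorname{Cov}_{\gamma(z)}[s_k(z)]$. Here the component scores are $s_k(z)=-A^{-1}z+A^{-1}\mu_k$. The first term does not depend on $k$, so it drops out of the covariance, and linearity gives $\operatorname{Cov}_{\gamma(z)}[s_k]=A^{-1}\Sigma_\mu(z)A^{-1}$. Hence
\[
\nabla^2\log p_Z(z)=-A^{-1}+A^{-1}\Sigma_\mu(z)A^{-1}.
\]

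Finally, the condition $A^{-1}\Sigma_\mu A^{-1}\preceq A^{-1}$ is converted by congruence with $A^{1/2}$. Since $A\succ0$, the map $M\mapsto A^{1/2}MA^{1/2}$ preserves the Loewner order, and it sends the inequality to $A^{-1/2}\Sigma_\mu A^{-1/2}\preceq I$. Both sides are symmetric, so this holds iff $\lambda_{\max}(A^{-1/2}\Sigma_\mu(z)A^{-1/2})\le1$.

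An equivalent route is through the generalized Rayleigh quotient $\sup_u u^\top A^{-1}\Sigma_\mu A^{-1}u/u^\top A^{-1}u$. Substituting $w=A^{-1/2}u$ turns it into the ordinary Rayleigh quotient of $A^{-1/2}\Sigma_\mu A^{-1/2}$. This also recovers the chain case, where $\Sigma_\mu=\sigma_\alpha^2vv^\top$ gives $\lambda_{\max}=\sigma_\alpha^2\|v\|_{A^{-1}}^2$.

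No step is a real obstacle. The only points needing care are the covariance identity for affine functions of $\mu_k$ and confirming that the congruence step is an equivalence, which uses the invertibility of $A^{1/2}$.
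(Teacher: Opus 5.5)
Your proposal is correct and follows the paper's proof. In both, $P=A^{-1}$ makes the metric constant, so the condition reduces to $\nabla^2\log p_Z(z)=-A^{-1}+A^{-1}\Sigma_\mu(z)A^{-1}\preceq 0$, which is then converted to the bound $\lambda_{\max}\bigl(A^{-1/2}\Sigma_\mu(z)A^{-1/2}\bigr)\le 1$; the paper does this last step via the generalized Rayleigh quotient (your alternative route), and your Loewner-order congruence with $A^{1/2}$ is an equivalent phrasing.
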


\begin{proof}
The metric is constant. Moreover,
\(\nabla^2\log p_Z(z)
=
-A^{-1}
+A^{-1}\Sigma_\mu(z)A^{-1}.
\)
Hence
\[
\sup_{u\neq0}
\frac{
u^\top A^{-1}\Sigma_\mu A^{-1}u
}{
u^\top A^{-1}u
}
=
\lambda_{\max}
\left(
A^{-1/2}\Sigma_\mu A^{-1/2}
\right),
\]
and the result follows.
\end{proof}

\section{Training and Path-Optimization Details}
\label{app:algorithms}

This section provides the adaptive flow--mixture training procedure, the
metric-based geodesic solver, and the definitions of the geometric
regularizers and mixture updates used in the experiments.

\subsection{Adaptive Flow--Mixture Training}

\begin{algorithm}[h]
\scriptsize
\caption{Stabilized Training of an Adaptive Multimodal Flow}
\label{alg:stable_training}

\KwIn{Dataset $\{x_n\}_{n=1}^{N}$; flow $\phi_\theta$; number of epochs
$E$; initial mixture size $m$; Dirichlet parameter $\alpha$ and coefficient
$\lambda_{\mathrm{dir}}$; damping coefficient $\zeta$; responsibility
temperature schedule $\{T_e\}_{e=1}^{E}$; mixture-update interval
$K_{\mathrm{mix}}$; structural warmup $E_0$; pruning threshold
$\varepsilon_{\mathrm p}$; mixture-complexity coefficient $\beta_m$;
merge-acceptance tolerance $\delta_{\mathrm{merge}}$; weight floor
$\epsilon_w$; covariance floor $\epsilon_A$;
optional mean-shrinkage coefficient $s$}

\KwOut{Flow parameters $\theta$ and active mixture parameters
$\{w_k,\mu_k,A_k\}_{k=1}^{K}$}

Initialize $\phi_\theta$ and
$\{w_k,\mu_k,A_k\}_{k=1}^{m}$\;

\For{$e=1,\ldots,E$}{
    Update $\theta$ over minibatches by minimizing
    $\mathcal L_{\mathrm{train}}$ from
    Equation~(\ref{eq:ltotal}), treating the current mixture parameters as
    fixed\;

    \If{$e>E_0$ \textbf{and} $e\bmod K_{\mathrm{mix}}=0$}{
        Compute latent codes $z_n=\phi_\theta(x_n)$\;

        Compute the temperature-smoothed responsibilities
        \begin{equation*}
        r_{nk}^{(T_e)}
        =
        \operatorname{softmax}_k
        \left(
        \frac{
        \log w_k+\log\mathcal N(z_n;\mu_k,A_k)
        }{
        T_e
        }
        \right)
        \end{equation*}
        and effective counts
        $N_k=\sum_n r_{nk}^{(T_e)}$\;

        Compute the MAP-style weight target
        \begin{equation*}
        \widetilde w_k
        =
        \max\{N_k+\lambda_{\mathrm{dir}}(\alpha-1),\epsilon_w\},
        \qquad
        w_k^*
        =
        \frac{\widetilde w_k}
        {\sum_j\widetilde w_j},
        \end{equation*}
        together with the responsibility-weighted targets
        \begin{align*}
        \mu_k^*
        &=
        \frac{1}{N_k}
        \sum_{n=1}^{N}
        r_{nk}^{(T_e)}z_n,
        \\
        A_k^*
        &=
        \frac{1}{N_k}
        \sum_{n=1}^{N}
        r_{nk}^{(T_e)}
        (z_n-\mu_k^*)(z_n-\mu_k^*)^\top.
        \end{align*}

        \If{$s>0$}{
            Apply the mean-shrinkage update in
            Remark~\ref{rem:mean_shrinkage} to the selected target means
            $\{\mu_k^*\}$, using the target weights $\{w_k^*\}$\;
        }

        Apply the damped weight and mean updates
        \begin{align*}
        w_k^+&=(1-\zeta)w_k+\zeta w_k^*,\\
        \mu_k^+&=(1-\zeta)\mu_k+\zeta\mu_k^*.
        \end{align*}
        Renormalize $\{w_k^+\}_{k=1}^{K}$. For diagonal covariances, update
        the variances in log space:
        \begin{equation*}
        \log\operatorname{diag}A_k^+
        =
        (1-\zeta)\log\operatorname{diag}A_k
        +
        \zeta\log\operatorname{diag}A_k^*.
        \end{equation*}
        For full covariances, set
        $A_k^+=(1-\zeta)A_k+\zeta A_k^*+\epsilon_A I$\;

        Prune components whose weights fall below
        $\varepsilon_{\mathrm p}$ and renormalize the remaining weights\;

        Let $p_X$ denote the flow-induced data density. For each candidate
        merge, evaluate
        \begin{equation*}
        S
        =
        -\sum_{n=1}^{N}
        \log p_X(x_n)
        +
        \beta_m K
        -
        \lambda_{\mathrm{dir}}(\alpha-1)
        \sum_{k=1}^{K}\log w_k,
        \end{equation*}
        where $K$ is the current number of active components\;

        Accept the best merge only if it decreases $S$ by more than
        $\delta_{\mathrm{merge}}$\;
    }
}

\Return{$\theta$ and
$\{w_k,\mu_k,A_k\}_{k=1}^{K}$}\;
\end{algorithm}

For a proposed merge of components $a$ and $b$, we use the
moment-preserving parameters
\begin{align}
w_{ab}&=w_a+w_b,\\
\mu_{ab}&=\frac{w_a\mu_a+w_b\mu_b}{w_{ab}},\\
A_{ab}&=\frac{1}{w_{ab}}
\sum_{k\in\{a,b\}}w_k
\left[A_k+(\mu_k-\mu_{ab})(\mu_k-\mu_{ab})^\top\right].
\end{align}
The score $S$ is evaluated with the flow fixed, and the best proposed merge
is accepted only when its score improves on the unmerged model by more than
$\delta_{\mathrm{merge}}$.

\subsection{Geometric Regularizers}
\label{app:regularizers}

Let $J_\theta(x)=D\phi_\theta(x)$ be the Jacobian of the flow, with $x$
sampled from the empirical minibatch distribution. The stochastic
local-isometry regularizer is
\begin{equation}
\label{eq:isometry_regularizer}
\mathcal L_{\mathrm{iso}}
=
\mathbb E_{x,v}
\left[
\left(
\left\lVert J_\theta(x)^\top v\right\rVert_2^2-1
\right)^2
\right],
\end{equation}
where $v\in\mathbb R^d$ is sampled from a standard Gaussian distribution and
normalized to unit Euclidean length. Vector--Jacobian products are evaluated
without explicitly forming the full Jacobian.

For MNIST and BlueWhale, we additionally use a weak metric-smoothness
regularizer. Given minibatch latent codes $\{z_i\}_{i=1}^{B}$, define
\begin{equation}
\label{eq:metric_smoothness_appendix}
\mathcal L_{\mathrm{metric}}
=
\frac{
\sum_{i\neq j}
a_{ij}
\lVert P(z_i)-P(z_j)\rVert_F^2
}{
\sum_{i\neq j}a_{ij}
},
\qquad
a_{ij}
=
\exp\left(
-\frac{\lVert z_i-z_j\rVert_2^2}{h}
\right),
\end{equation}
where $h$ is the median pairwise squared distance within the sampled
minibatch. The weighting emphasizes nearby pairs, so the loss discourages
abrupt local changes in the expected-precision field.

\subsection{Mean-Shrinkage Update}
\label{app:mixture_updates}

\begin{remark}[Mean-shrinkage update]
\label{rem:mean_shrinkage}
For selected experiments, we encourage overlap between a subset
$\mathcal I$ of mixture components by moving their target means slightly
toward their target-weighted centre. We first compute
\begin{equation}
\label{eq:shrinkage_center}
\bar{\mu}
=
\frac{
\sum_{k\in\mathcal I}w_k^*\mu_k^*
}{
\sum_{k\in\mathcal I}w_k^*
},
\end{equation}
and then apply
\begin{equation}
\label{eq:mean_shrinkage}
\mu_k^*
\leftarrow
(1-s)\mu_k^*+s\bar{\mu},
\qquad
k\in\mathcal I,
\end{equation}
where $s\in[0,1]$ is the mean-shrinkage coefficient. Depending on the
experiment, $\mathcal I$ contains either all active components or the two
components with the largest mixture weights. This operation modifies the
mixture update directly and is not an additional term in the flow-training
objective.
\end{remark}

\subsection{Metric-Based Geodesic Computation}

\begin{algorithm}[h]
\small
\caption{Metric-Based Latent Geodesic Computation}
\label{alg:stable_geodesic}

\KwIn{Trained flow $\phi_\theta$ and inverse $\phi_\theta^{-1}$; mixture
parameters $\{w_k,\mu_k,A_k\}_{k=1}^{K}$; endpoints $x_A,x_B$; number of
segments $T$; optimizer step size $\eta_{\mathrm{geo}}$; number of iterations
$M$}

\KwOut{Data-space path $\{x_i\}_{i=0}^{T}$}

Compute $z_0=\phi_\theta(x_A)$ and $z_T=\phi_\theta(x_B)$\;

Initialize
\begin{equation*}
z_i
=
\left(1-\frac{i}{T}\right)z_0
+
\frac{i}{T}z_T,
\qquad
i=1,\ldots,T-1;
\end{equation*}

\For{$m=1,\ldots,M$}{
    \For{$i=0,\ldots,T-1$}{
        Set $\bar z_i=(z_i+z_{i+1})/2$\;

        Compute
        \begin{equation*}
        P(\bar z_i)
        =
        \sum_{k=1}^{K}
        \gamma_k(\bar z_i)A_k^{-1};
        \end{equation*}
    }

    Compute
    \begin{equation*}
    \mathcal E_{g_Z}
    =
    T\sum_{i=0}^{T-1}
    \left\|
    P(\bar z_i)(z_{i+1}-z_i)
    \right\|_2^2;
    \end{equation*}

    Update only $z_1,\ldots,z_{T-1}$ by taking a gradient-based optimizer
    step on $\mathcal E_{g_Z}$ with step size $\eta_{\mathrm{geo}}$\;
}

Map the optimized vertices to data space:
$x_i=\phi_\theta^{-1}(z_i)$ for $i=0,\ldots,T$\;

\Return{$\{x_i\}_{i=0}^{T}$}\;
\end{algorithm}

Algorithm~\ref{alg:stable_geodesic} minimizes only the discrete energy of
the proposed metric $g_Z=P^2$. It contains no density modulation or
auxiliary smoothness, spacing, Euclidean-arclength, or geodesic
log-concavity penalty. Relative to the convention for $E_g$ in
Appendix~\ref{app:geometric_notation}, the constant factor $1/2$ is omitted;
this does not change the minimizing path.

\subsection{BlueWhale Support-Graph Path Computation}
\label{app:bluewhale_path}

The encoded BlueWhale observations concentrate around three thin latent
curves. Direct continuous path optimization can therefore move between
nearby branches through regions containing little or no observed support.
For this experiment, we instead compute a data-supported approximation on
a neighbourhood graph constructed from the full-dimensional latent codes.
The two-dimensional PCA representation is used only for visualization.

Let
\begin{equation}
z_n = \phi_\theta(x_n),
\qquad n=1,\ldots,N,
\end{equation}
denote the encoded training observations. We construct a mutual
$k$-nearest-neighbour graph on $\{z_n\}_{n=1}^N$ using Euclidean distance
only to determine local connectivity. In the reported experiment, we use
$k=40$.

For each retained edge $(i,j)$, let
\begin{equation}
\bar z_{ij}
=
\frac{z_i+z_j}{2},
\qquad
\Delta z_{ij}
=
z_j-z_i.
\end{equation}
Its midpoint approximation to the Riemannian length under the proposed
metric $g_Z(z)=P(z)^2$ is
\begin{equation}
\ell_{ij}
=
\sqrt{
\Delta z_{ij}^{\top}
g_Z(\bar z_{ij})
\Delta z_{ij}
}.
\label{eq:bluewhale_metric_edge}
\end{equation}

To discourage graph transitions through low-density regions, we modulate
this length using the latent negative log-likelihood. Let
$\beta_{\mathrm{NLL}}\geq0$ be the density-penalty strength and let
$c_{\mathrm{clip}}>0$ be a clipping threshold. Define
\begin{equation}
q_{\mathrm{NLL}}(z)
=
\operatorname{clip}
\left(
\frac{
-\log p_Z(z)-m_{\mathrm{NLL}}
}{
s_{\mathrm{NLL}}
},
-c_{\mathrm{clip}},c_{\mathrm{clip}}
\right),
\label{eq:bluewhale_standardized_nll}
\end{equation}
where $m_{\mathrm{NLL}}$ is the median negative log-likelihood over the
encoded training set, $s_{\mathrm{NLL}}$ is the corresponding median
absolute deviation (clipped below by a small positive numerical floor), and
$c_{\mathrm{clip}}=6$. The edge cost is
then
\begin{equation}
C_{ij}^{\mathrm{graph}}
=
\ell_{ij}
\exp\left(
\frac{\beta_{\mathrm{NLL}}}{2}
q_{\mathrm{NLL}}(\bar z_{ij})
\right).
\label{eq:bluewhale_graph_cost}
\end{equation}
We use $\beta_{\mathrm{NLL}}=0.5$. Thus, edges passing through lower-density
regions receive a larger cost, while setting $\beta_{\mathrm{NLL}}=0$
recovers the graph
approximation induced only by the proposed Riemannian metric.

We apply Dijkstra's algorithm to obtain the minimum-cost graph path between
the encoded endpoint observations. Because its vertices correspond to
observed BlueWhale images, the resulting sequence remains on the sampled
data support. We therefore refer to it as a
\emph{density-aware, metric-weighted support-graph path}, rather than an
exact continuous geodesic of $g_Z$.

\section{Experimental Protocol}
\label{app:experimental-details}

\subsection{Dataset Construction}

\subsubsection{Wavy-X}
\label{app:wavyx-generation}

Wavy-X is generated from two sinusoidally perturbed strokes intersecting at
the origin. For each sample, we first draw a stroke indicator
$c\sim\operatorname{Bernoulli}(1/2)$ and a longitudinal coordinate
$t\sim\operatorname{Unif}[-4,4]$. Let $u_c$ denote the unit direction of the
selected stroke, $n_c$ its normal direction, and $s_c\in\{-1,1\}$ the
orientation of its sinusoidal perturbation. The two strokes intersect at an
angle of $90^\circ$. Samples are generated as
\[
x=(t+\epsilon_{\parallel})u_c
  +s_c\,0.6\sin(2.5t)n_c
  +\epsilon_{\perp}n_c,
\]

where
\[
\epsilon_{\parallel}\sim\mathcal N(0,0.04^2),
\qquad
\epsilon_{\perp}\sim\mathcal N(0,0.12^2).
\]
We generate $N=8000$ observations using random seed zero.

\subsubsection{Tetrahedral Star}
\label{app:tetrahedral-generation}

The tetrahedral-star dataset consists of four noisy unit-length arms in
$\mathbb R^3$. Let
\begin{equation}
c_1=(1,1,1)^\top,\quad
c_2=(1,-1,-1)^\top,\quad
c_3=(-1,1,-1)^\top,\quad
c_4=(-1,-1,1)^\top,
\end{equation}
and define the unit arm directions
\begin{equation}
u_k=\frac{c_k}{\lVert c_k\rVert_2},
\qquad k=1,\ldots,4.
\end{equation}
For each arm, we independently draw $n_{\mathrm{arm}}=2000$ longitudinal
coordinates $t_{kn}\sim\operatorname{Uniform}[0,1]$. Let
\begin{equation}
\epsilon_{kn}^{\perp}
=
(I_3-u_ku_k^\top)\xi_{kn},
\qquad
\xi_{kn}\sim\mathcal N(0,\sigma_{\perp}^2I_3),
\end{equation}
and
\begin{equation}
\epsilon_{kn}^{\parallel}=a_{kn}u_k,
\qquad
a_{kn}\sim\mathcal N(0,\sigma_{\parallel}^2),
\end{equation}
with all random variables sampled independently. Each observation is
\begin{equation}
x_{kn}
=
t_{kn}u_k
+
\epsilon_{kn}^{\perp}
+
\epsilon_{kn}^{\parallel}.
\end{equation}
We use $\sigma_{\perp}=0.05$, $\sigma_{\parallel}=0.02$, and random seed
zero, producing $N=4n_{\mathrm{arm}}=8000$ observations. Arm identities are
retained only for evaluation and visualization.

\subsubsection{BlueWhale}
\label{app:bluewhale-generation}

We construct BlueWhale by rendering a fixed whale mesh under viewpoints
parameterized by a three-dimensional star-shaped variable. For each sample,
we select one of three unit arm directions $u_k$, draw
$t\sim\operatorname{Uniform}[0,1]$, and set $\theta=tu_k$. The parameter
$\theta$ determines the object rotation under fixed camera, lighting, and
background settings. Images are rendered with supersampling and downsampled
to $28\times28$ grayscale pixels. We generate 5,000 images and retain the
arm identity and $\theta$ for evaluation only. These
metadata define a reference trajectory between different arms: from the
first endpoint to the common origin and then from the origin to the second
endpoint.

\subsubsection{MNIST}
\label{app:mnist-details}

Digit labels are not used to train the MNIST generative models; they are used
only to select held-out different-digit endpoint pairs, train the independent
evaluation classifier, and organize visualizations. The adaptive model is
initialized with $m=30$ candidate components, and interpolation paths use
$T=16$ segments. Additional metric specifications are
reported below.

\subsection{Evaluation Metrics}
\label{app:evaluation-metrics}

For Wavy-X, we report negative log-likelihood (NLL) and mean local isometry
distortion. The latter is
\begin{equation}
D_{\mathrm{iso}}
=
\mathbb E_x\left[
\frac{1}{d}
\left\|
D\phi_\theta(x)^\top D\phi_\theta(x)-I_d
\right\|_F
\right],
\end{equation}
where $d=2$ and $\phi_\theta$ is the learned flow from data to latent space. Flow
conditioning is summarized by the 95th percentile, denoted Q95, of
\begin{equation}
\kappa_J(x)
=
\frac{\sigma_{\max}(D\phi_\theta(x))}
{\sigma_{\min}(D\phi_\theta(x))}.
\end{equation}
We distinguish the active-component count
\begin{equation}
K_{\mathrm{active}}
=
\sum_{k=1}^{K}\mathbf 1\{w_k\geq\tau\},
\qquad \tau=0.01,
\end{equation}
from the entropy-based effective component count
\begin{equation}
K_{\mathrm{eff}}
=
\exp\left(-\sum_{k=1}^{K}\bar w_k\log\bar w_k\right),
\qquad
\bar w_k=\frac{w_k}{\sum_{j=1}^{K}w_j}.
\end{equation}
The former is integer-valued, whereas the latter measures how evenly the
mixture mass is distributed across components.

For a curvature margin $\delta_\star\geq0$, the star--Ricci diagnostic is
\begin{equation}
L_\star
=
\sum_{i=1}^{d}
\operatorname{softplus}(r_i+\delta_\star),
\qquad
r_i=\operatorname{Ric}_{ii}(0),
\end{equation}
and $\lambda_{\max}(\operatorname{Ric}_\star(0))$ denotes the largest
eigenvalue of the corresponding local star--Ricci matrix. These quantities
are evaluated after training as local diagnostics and are not used to
optimize the reported models. We use $\delta_\star=0$ in all reported
experiments. For learned mixtures with unequal means, these are structured
star diagnostics rather than the exact Ricci curvature of the learned
metric.

After subdividing each optimized latent segment into 20 equal parts and
mapping the resulting vertices to data space, let
$\{\widetilde x_m\}_{m=0}^{M}$ denote the densified path and let
$\mathcal X$ be the sampled data cloud. Its support deviation is
\begin{equation}
D_{\mathrm{support}}
=
\frac{1}{M-1}
\sum_{m=1}^{M-1}
\frac{\min_{x\in\mathcal X}\lVert \widetilde x_m-x\rVert_2}
{s_{\mathcal X}},
\end{equation}
where $s_{\mathcal X}$ is the median nearest-neighbour distance within
$\mathcal X$, excluding each point itself from its nearest-neighbour query.
Increasing the subdivision count from 20 to 40 produces essentially
unchanged results.

\paragraph{BlueWhale.}
At matched positions along each candidate and reference sequence, we report
mean squared error (MSE), mean absolute error (MAE), and the structural
similarity index (SSIM). Worst-step MSE is the largest frame-wise MSE along
the sequence. The metadata-defined reference and the procedure used to
construct the support-graph path are described in
Appendices~\ref{app:bluewhale-generation} and~\ref{app:bluewhale_path}.

\paragraph{MNIST.}
Let $f(x)$ denote the representation produced by the independently trained
MNIST classifier. Let $(m_p,\Sigma_p)$ and $(m_r,\Sigma_r)$ be the empirical
means and covariances of the interior-path and reference-image features,
respectively. Their feature Fr\'echet distance is
\begin{equation}
D_{\mathrm{FD}}
=
\lVert m_p-m_r\rVert_2^2
+
\operatorname{tr}\!\left(
\Sigma_p+\Sigma_r
-2(\Sigma_p^{1/2}\Sigma_r\Sigma_p^{1/2})^{1/2}
\right).
\end{equation}
For a path $\{x_t\}_{t=0}^{T}$, define the perceptual step lengths
\begin{equation}
\ell_t=\lVert f(x_{t+1})-f(x_t)\rVert_2,
\qquad t=0,\ldots,T-1.
\end{equation}
Perceptual path length and relative perceptual-distance variance are
\begin{equation}
\operatorname{PPL}=\sum_{t=0}^{T-1}\ell_t,
\qquad
\operatorname{PDV}_{\mathrm{rel}}
=
\frac{T^{-1}\sum_{t=0}^{T-1}(\ell_t-\bar\ell)^2}
{\bar\ell^2},
\qquad
\bar\ell=\frac1T\sum_{t=0}^{T-1}\ell_t.
\end{equation}
Feature FD is our primary measure of interior-sample realism; PPL and
$\operatorname{PDV}_{\mathrm{rel}}$ measure perceptual length and step-size
uniformity.

\subsection{Component-Wise Linear Reconstruction}
\label{app:component-reconstruction}

For observations $x_n\in\mathbb R^d$, define
$r_{nk}=\gamma_k(\phi_\theta(x_n))$ and $N_k=\sum_n r_{nk}$. The weighted
data-space mean and covariance of component $k$ are
\begin{equation}
\bar x_k=\frac{1}{N_k}\sum_n r_{nk}x_n,
\qquad
C_k^{x}
=
\frac{1}{N_k}\sum_n r_{nk}
(x_n-\bar x_k)(x_n-\bar x_k)^\top.
\end{equation}
Let $\nu_{k,1}\geq\cdots\geq\nu_{k,d}\geq0$ be the eigenvalues of
$C_k^x$. For a variance-retention threshold $\tau_{\mathrm{var}}\in(0,1)$,
the component-wise effective dimension is
\begin{equation}
d_k
=
\min\left\{
r:
\frac{\sum_{i=1}^{r}\nu_{k,i}}
{\sum_{i=1}^{d}\nu_{k,i}}
\geq\tau_{\mathrm{var}}
\right\}.
\end{equation}
If $U_k\in\mathbb R^{d\times d_k}$ contains the leading eigenvectors, the
component-$k$ reconstruction of $x_n$ is
\begin{equation}
\widehat x_{nk}
=
\bar x_k+U_kU_k^\top(x_n-\bar x_k).
\end{equation}
Reported oracle reconstructions select, for each observation, the component
with the smallest reconstruction error. This is a post-training diagnostic,
not an inference-time assignment rule. We use
$\tau_{\mathrm{var}}=0.90$ in the reported analyses.

\section{Additional Synthetic Results}
\label{app:synthetic}

This section provides additional results for the synthetic point-cloud
experiments. Wavy-X illustrates component-wise structure and flow deformation
in two dimensions, while the tetrahedral-star experiment extends the
evaluation to three dimensions. The local reconstruction analyses are
performed after training and are not used to optimize the flow. Latent
responsibilities define weighted component memberships, from which we fit
component-wise linear subspaces in data space. We report the number of
leading directions required to explain a specified fraction of weighted
variance as a local effective dimension, rather than as a formal intrinsic
dimension.

\subsection{Wavy-X}
\label{app:rae-wavyx}

\subsubsection{Component-Wise Reconstruction}

\begin{figure}[h]
    \centering
    \includegraphics[
        width=0.6\linewidth,
        trim=0.2cm 0.2cm 0.1cm 0.2cm,
        clip
    ]{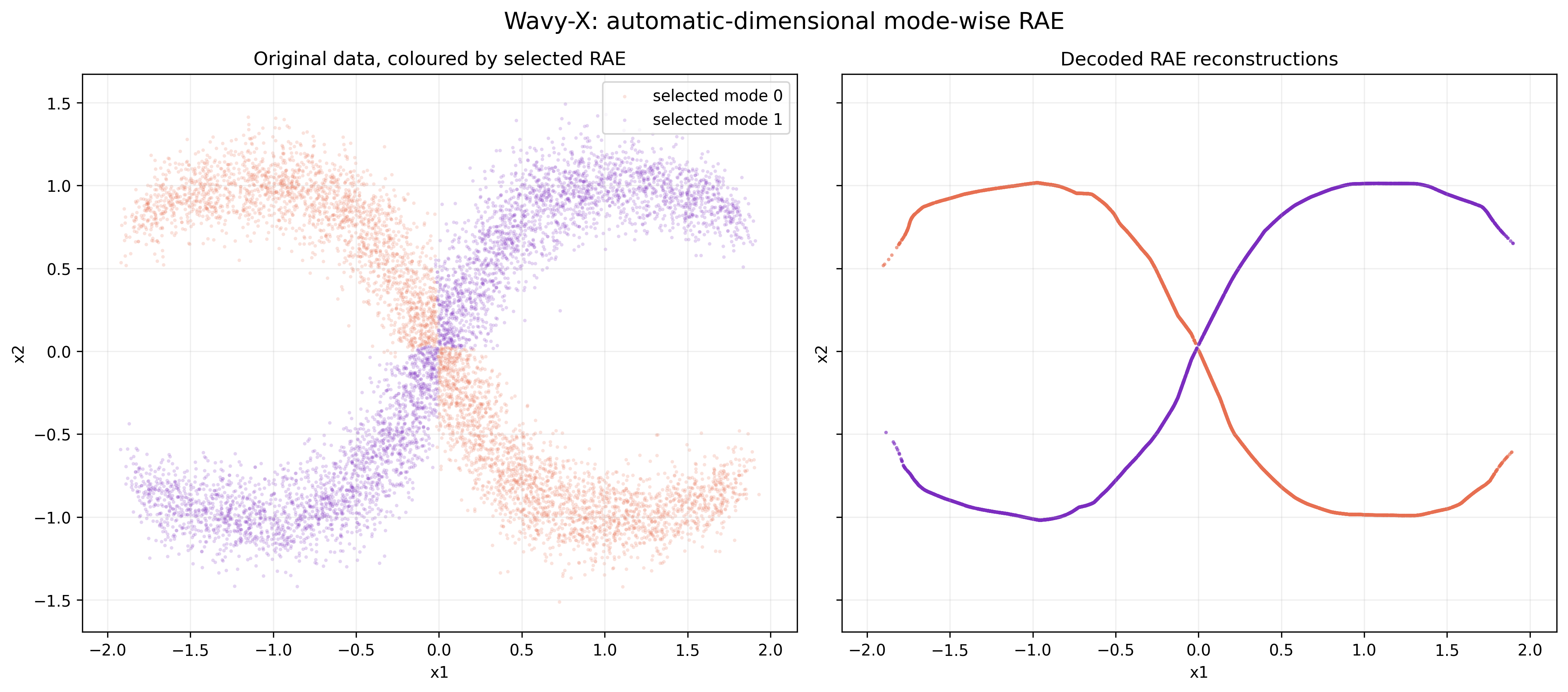}
    \caption{\textbf{Component-wise linear reconstruction on Wavy-X.}
    Original samples and responsibility-weighted linear reconstructions for
    the two retained components. One principal direction explains $99.41\%$
    and $99.29\%$ of their respective weighted data-space variance.}
    \label{fig:wavyx-rae}
\end{figure}

The two retained components have weights $0.5032$ and $0.4968$, and their
dominant directions approximately recover the two crossing arm pairs.
Oracle best-subspace selection, which assigns each sample to the
component-wise reconstruction with the lowest error, gives mean MSE
$1.04\times10^{-2}$ and Q95 MSE $4.17\times10^{-2}$. These results summarize
the dominant component-wise linear structure; they do not establish that
the complete Wavy-X distribution is globally linear or one-dimensional.

\subsubsection{Layer-Wise Transport}
\label{app:wavyx-layerwise}

Figure~\ref{fig:wavyx_layerwise_transport} visualizes how matched Wavy-X
samples evolve through the layers of each normalizing flow. Blue points show
the observations at the input layer, red points show their final latent
representations, and black curves trace individual samples through the
intermediate layers. The figure complements the numerical distortion
measures by showing the different transport deformations produced by the
four latent models.

\begin{figure*}[h]
    \centering
    \includegraphics[
        width=0.7\textwidth
    ]{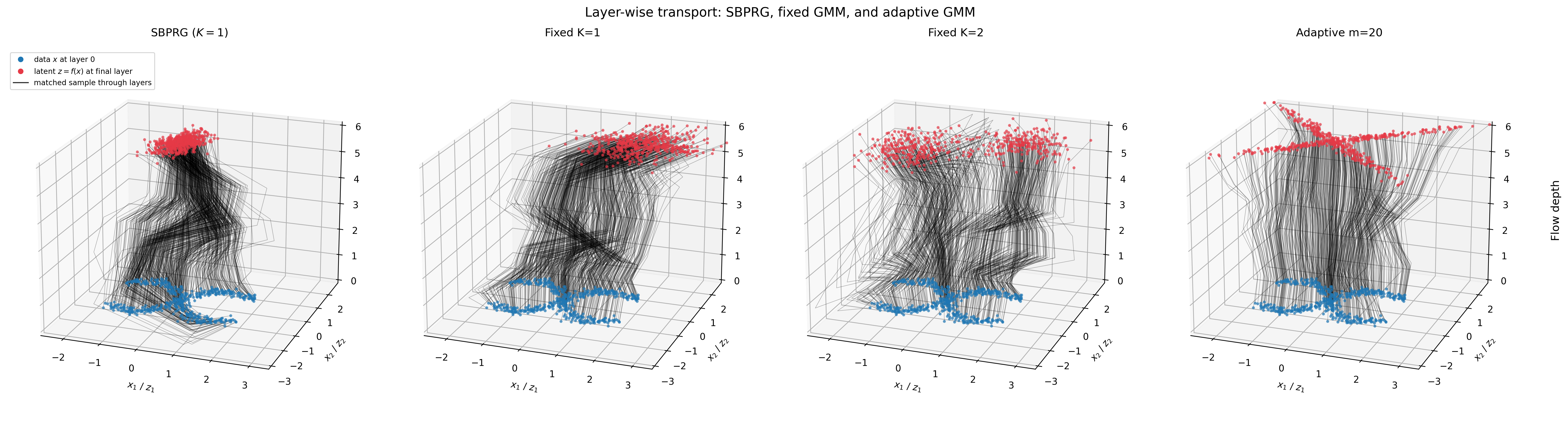}
    \caption{\textbf{Layer-wise transport on Wavy-X.}
    Matched samples are traced from the data space at flow depth zero
    (blue), through the intermediate flow layers (black), to the final latent
    representation (red). From left to right: SBPRG with $K=1$, the fixed $m=1$ single-Gaussian model, the fixed $m=2$ Gaussian-mixture model, and the
    adaptive model initialized with $m=20$ candidate components. The visualization
    shows how each learned flow reorganizes the same Wavy-X observations and
    complements the quantitative transport-distortion comparison.}
    \label{fig:wavyx_layerwise_transport}
\end{figure*}

\FloatBarrier
\subsection{Three-Dimensional Tetrahedral-Star Experiment}
\label{app:tetrahedral}

We further evaluate the proposed method on a three-dimensional
tetrahedral-star dataset consisting of four noisy arms meeting at a common
junction. This extends the Wavy-X setting to a higher-dimensional multimodal
geometry. We compare SBPRG, fixed one- and two-component mixture bases, and
the adaptive mixture-base model. The same endpoint pairs are re-encoded
under each model before computing the corresponding metric paths.

Figure~\ref{fig:tetra-paths} shows the resulting paths in latent and data
space. The adaptive representation preserves the multi-arm organization, and
the mapped paths remain close to the observed support for both selected
endpoint pairs.

\begin{figure}[h]
    \centering
    \includegraphics[
        width=0.75\columnwidth,
        trim=0.3cm 0.5cm 0.3cm 0.1cm,
        clip
    ]{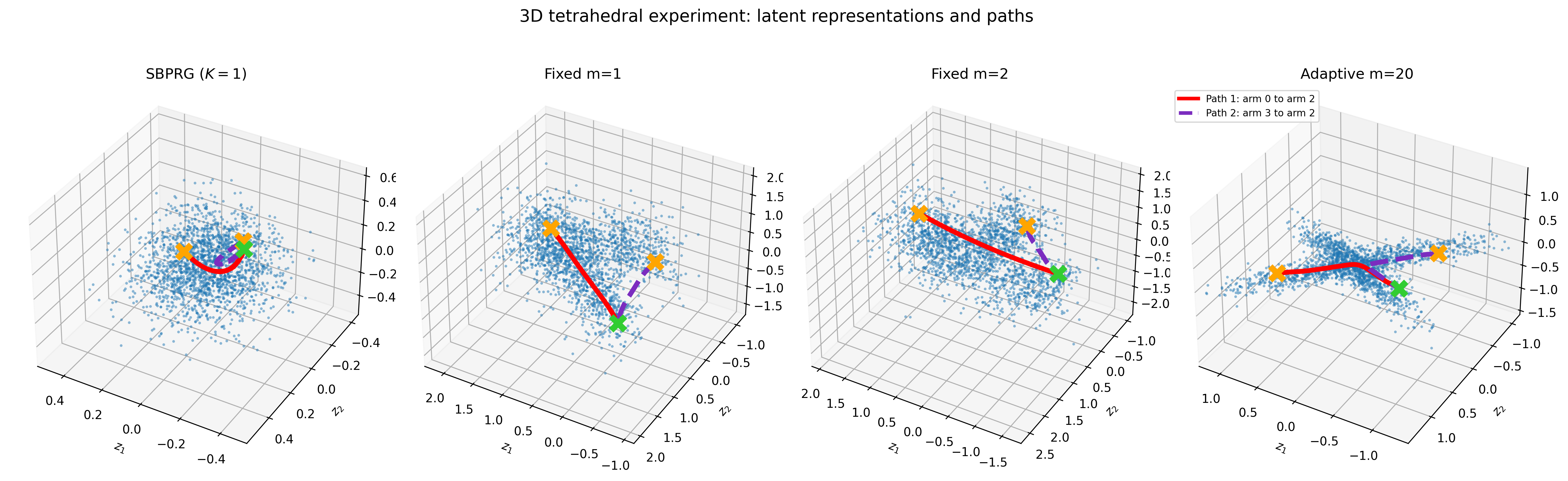}

    \vspace{0.5em}

    \includegraphics[
        width=0.75\columnwidth,
        trim=0.3cm 0.5cm 0.3cm 1.5cm,
        clip
    ]{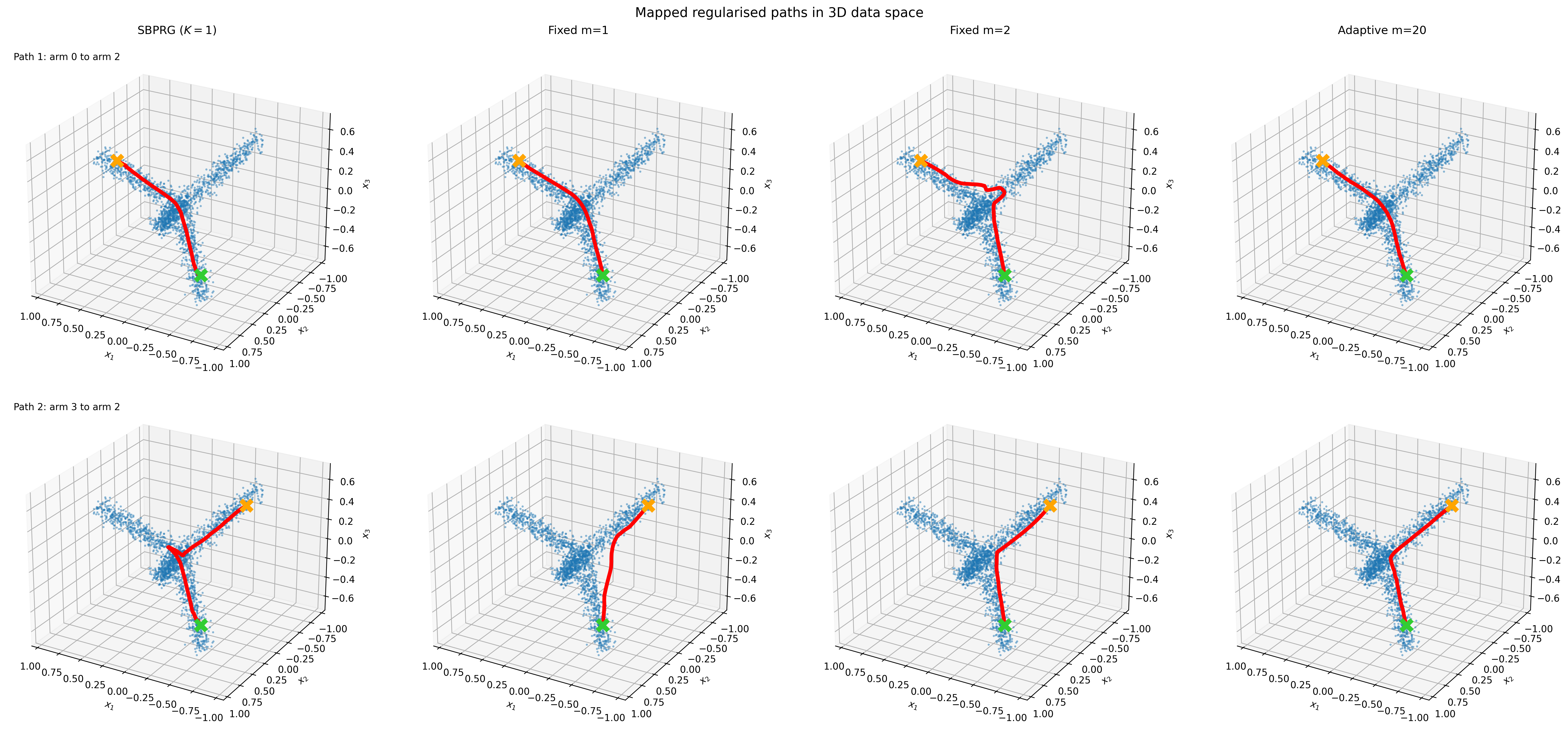}

    \caption{\textbf{Tetrahedral-star metric paths.}
    \textbf{Top:} latent representations and paths.
    \textbf{Bottom:} corresponding paths mapped to data space. Red and
    purple curves denote the two prescribed endpoint pairs.}
    \label{fig:tetra-paths}
\end{figure}

\begin{figure*}[h]
    \centering
    \begin{subfigure}[t]{0.55\textwidth}
        \centering
        \includegraphics[
            width=\linewidth,
            trim=0.2cm 0.6cm 0.2cm 0.7cm,
            clip
        ]{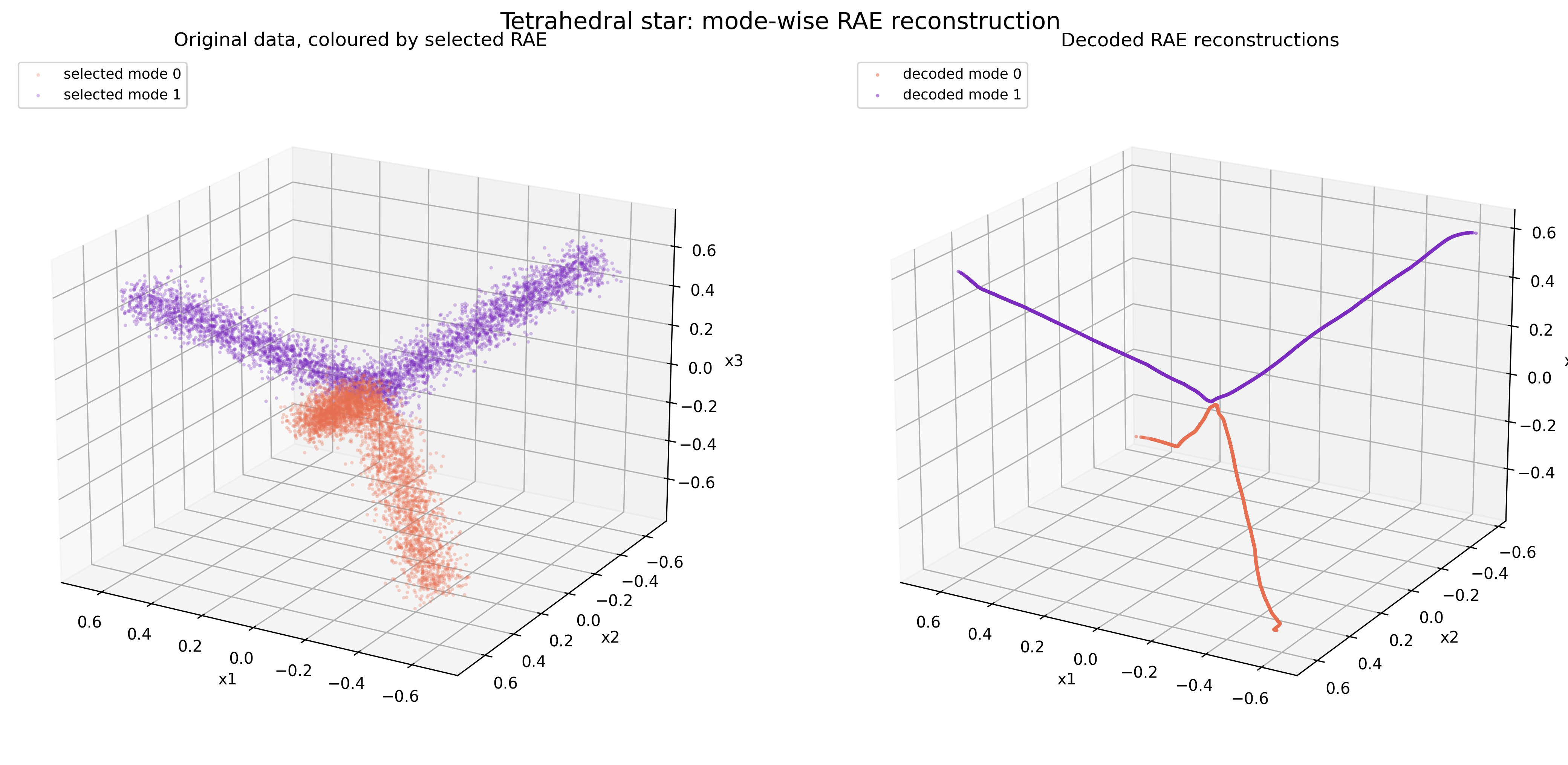}
        \caption{Component-wise local reconstruction.}
        \label{fig:tetra-rae}
    \end{subfigure}

    \vspace{0.5em}

    \begin{subfigure}[t]{0.65\textwidth}
        \centering
        \includegraphics[
            width=\linewidth,
            trim=0.2cm 0.6cm 0.2cm 0.8cm,
            clip
        ]{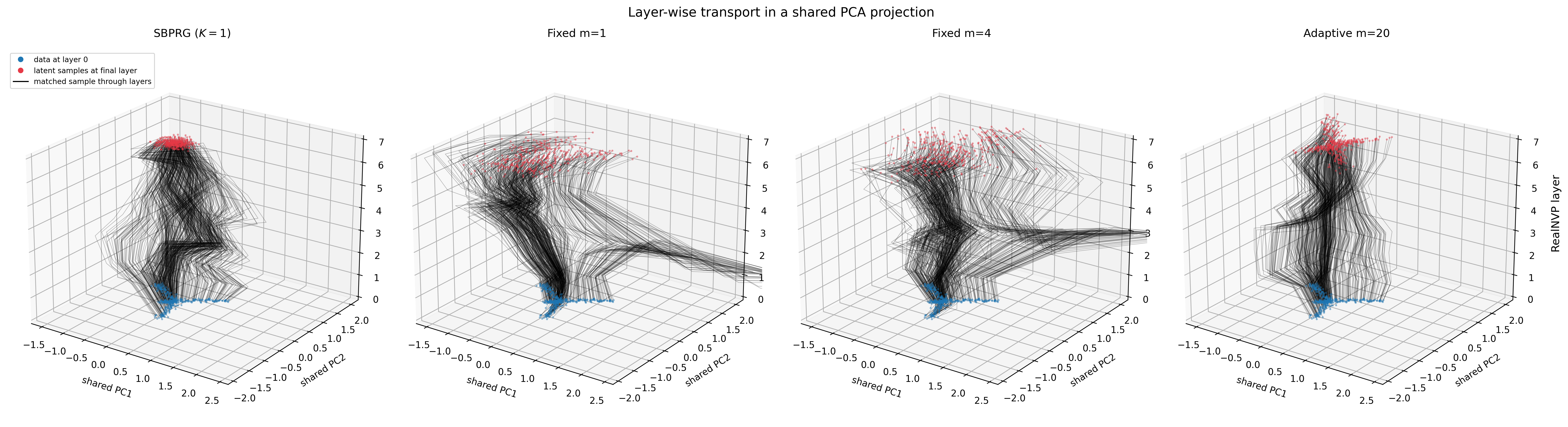}
        \caption{Layer-wise transport in a shared PCA projection.}
        \label{fig:tetra-transport}
    \end{subfigure}

    \caption{\textbf{Additional tetrahedral-star diagnostics.}
    (a) Component-wise reconstructions summarize the dominant local linear
    structure. (b) Layer-wise trajectories visualize how each flow
    transports the data toward its latent base.}
    \label{fig:tetra-diagnostics}
\end{figure*}

Table~\ref{tab:tetrahedral} reports the quantitative results. In this run,
the adaptive model retains approximately two effective components and
obtains the best NLL, isometry distortion, and upper-tail Jacobian condition
number. It obtains the lowest support deviation for Path~2, while SBPRG
obtains the lowest value for Path~1.

\begin{table*}[h]
\centering
\caption{\textbf{Tetrahedral-star quantitative comparison.}
Lower is better for NLL, isometry distortion, upper-tail Jacobian
conditioning, the star--Ricci diagnostic $L_\star$, and support deviation.}
\label{tab:tetrahedral}
\scriptsize
\setlength{\tabcolsep}{3pt}
\resizebox{\textwidth}{!}{%
\begin{tabular}{lcccccccc}
\toprule
Model
& NLL $\downarrow$
& $D_{\mathrm{iso}}\downarrow$
& Q95 $\kappa_J\downarrow$
& $K_{\mathrm{eff}}$
& $L_\star\downarrow$
& $\lambda_{\max}(\mathrm{Ric}_\star(0))$
& Support 1 $\downarrow$
& Support 2 $\downarrow$ \\
\midrule
SBPRG ($K=1$)
& -1.7634 & 4.6871 & 61.5166 & 1.0000 & 2.0794 & 0.0000
& \textbf{0.3970} & 0.4137 \\
Fixed $m=1$
& -1.1533 & 21.3192 & 12.2771 & 1.0000 & 2.0794 & $\approx0$
& 0.4329 & 1.1875 \\
Fixed $m=2$
& -1.1440 & 23.5916 & 11.3507 & 2.0000 & 2.0794 & $\approx0$
& 0.5142 & 0.6308 \\
Adaptive $m=20$
& \textbf{-1.7787} & \textbf{2.8724} & \textbf{5.9996} & 1.9988
& \textbf{$<10^{-6}$} & \textbf{-14.8535}
& 0.4329 & \textbf{0.4067} \\
\bottomrule
\end{tabular}%
}
\end{table*}

\section{Additional Image-Data Analysis}
\label{app:image-analysis}

The following post-training analyses summarize component-wise linear
structure for BlueWhale and MNIST. Component subspaces are fitted in data
space using latent responsibilities and do not alter the learned flow.

\subsection{BlueWhale}
\label{app:rae-bluewhale}

The learned mixture retains three active components with weights $0.2760$,
$0.3182$, and $0.4058$. Under the $90\%$ weighted-variance criterion, their
effective dimensions are $d_k\in\{2,3,2\}$, explaining $96.65\%$, $91.16\%$,
and $90.70\%$ of the respective weighted data-space variance. Although each
generating arm is parameterized by one scalar, its rendered image trajectory
is nonlinear in pixel space and may therefore require more than one linear
direction.

Oracle best-subspace reconstruction gives mean MSE
$5.66\times10^{-4}$ and Q95 MSE $1.46\times10^{-3}$. Here, oracle selection
means that the lowest-error component reconstruction is chosen for each
sample; it is a post-training diagnostic rather than an inference-time
assignment rule.

\FloatBarrier
\subsection{MNIST}
\label{app:mnist-rae}

Figure~\ref{fig:mnist-rae} shows representative reconstructions obtained by
oracle best-subspace selection. The model retains
$K_{\mathrm{active}}=13$ active components, with component-wise effective
dimensions ranging from 17 to 56 (mean 36). Digit~1 has the lowest reported
reconstruction MSE ($4.1\times10^{-3}$), whereas digit~8 has the highest
($1.78\times10^{-2}$). This difference is consistent with the greater
linear complexity of two-loop shapes, although reconstruction errors alone
do not identify topology. Residual blur is expected because each
reconstruction uses a linear approximation in pixel space. Digit labels are
used only to organize the visualization and are not used to fit the
component models.

\begin{figure}[h]
    \centering
    \includegraphics[width=0.9\linewidth]{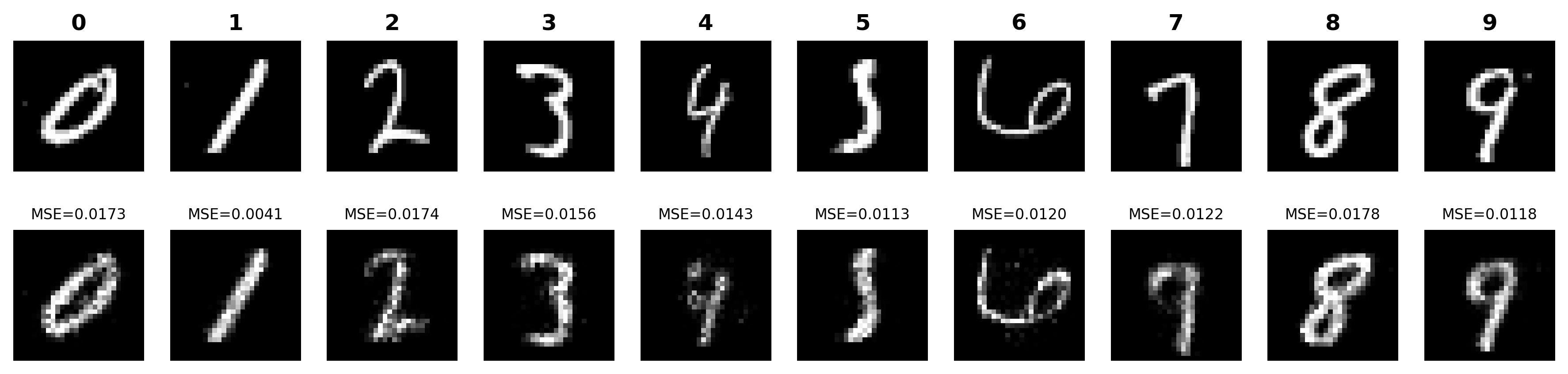}
    \caption{\textbf{Component-wise local reconstruction on MNIST.}
    \emph{Top:} one representative sample per digit class.
    \emph{Bottom:} oracle best-subspace reconstruction with pixel MSE. The
    model retains $K_{\mathrm{active}}=13$ active components from an initial
    pool of 30; digit labels are not used during training or subspace
    fitting.}
    \label{fig:mnist-rae}
\end{figure}

\end{document}